\documentclass[conference]{IEEEtran}
\IEEEoverridecommandlockouts
\usepackage{cite}
\usepackage{amsmath,amssymb,amsfonts}
\usepackage{amsthm}
\usepackage{algorithm}
\usepackage{algorithmic}
\usepackage{graphicx}
\usepackage{textcomp}
\usepackage{xcolor}
\usepackage{blindtext}
\usepackage{float}
\usepackage[hidelinks]{hyperref}
\usepackage{multirow}
\usepackage{subcaption}
\usepackage{tabularx}
\usepackage{booktabs}    
\usepackage{multirow}    
\usepackage{array}       
\newcolumntype{P}[1]{>{\centering\arraybackslash}p{#1}}

\newtheorem{thm}{Theorem}[section]

\newtheorem{ass}{Assumption}[section]

\newcommand*{\mycorref}[2][Corollary~]{%
  \hyperref[{#2}]{#1\ref*{#2}}%
}

\newcommand*{\mythm}[2][Theorem~]{%
  \hyperref[{#2}]{#1\ref*{#2}}%
}

\newcommand*{\myassumption}[2][Assumption~]{%
  \hyperref[{#2}]{#1\ref*{#2}}%
}

\newcommand*{\myalgref}[2][Algorithm~]{%
  \hyperref[{#2}]{#1\ref*{#2}}%
}

\newcommand*{\myeqref}[2][Equation~]{%
  \hyperref[{#2}]{#1(\ref*{#2})}%
}
\def\equationautorefname#1#2\null{%
  Equation#1(#2\null)%
}

\begin{document}

\title{Prior-Informed Neural Network Initialization: A Spectral Approach for Function Parameterizing Architectures}

%Prior-Informed Neural Network Initialization for Function Parameterizing Architectures - Extracting Component Parameters from Time Series Data for Bag-of-Function Encoder Initialization

\author{\IEEEauthorblockN{1\textsuperscript{st} David Orlando Salazar Torres, 2\textsuperscript{nd} Diyar Altinses, 3\textsuperscript{rd} Andreas Schwung}
\IEEEauthorblockA{\textit{Department of Automation Technology and Learning Systems} \\
\textit{South Westphalia University of Applied Sciences}\\
Soest, Germany \\
salazartorres.davidorlando, altinses.diyar, schwung.andreas@fh-swf.de}
}

\maketitle

\begin{abstract}
Neural network architectures designed for function parameterization, such as the Bag-of-Functions (BoF) framework, bridge the gap between the expressivity of deep learning and the interpretability of classical signal processing. However, these models are inherently sensitive to parameter initialization, as traditional data-agnostic schemes fail to capture the structural properties of the target signals, often leading to suboptimal convergence. In this work, we propose a prior-informed design strategy that leverages the intrinsic spectral and temporal structure of the data to guide both network initialization and architectural configuration. A principled methodology is introduced that uses the Fast Fourier Transform to extract dominant seasonal priors, informing model depth and initial states, and a residual-based regression approach to parameterize trend components. Crucially, this structural alignment enables a substantial reduction in encoder dimensionality without compromising reconstruction fidelity. A supporting theoretical analysis provides guidance on trend estimation under finite-sample regimes. Extensive experiments on synthetic and real-world benchmarks demonstrate that embedding data-driven priors significantly accelerates convergence, reduces performance variability across trials, and improves computational efficiency. Overall, the proposed framework enables more compact and interpretable architectures while outperforming standard initialization baselines, without altering the core training procedure.
\end{abstract}

\begin{IEEEkeywords}
Neural network initialization, Bag-of-Functions, time series decomposition, spectral analysis, interpretable machine learning, function parameterization
\end{IEEEkeywords}

\section{Introduction}

Neural networks have achieved remarkable success across a wide range of domains, including computer vision \cite{rombach2022high}, natural language processing \cite{raiaan2024review}, and signal modeling \cite{alqudah2025review, torres2025toward}. Their ability to approximate complex nonlinear relationships has made them a cornerstone of modern machine learning. However, despite their expressive power, neural networks remain inherently sensitive to stochastic factors during training, particularly to the initialization of their parameters \cite{pacelli2023statistical, neumayer2024effect}. The choice of initial weights plays a critical role in shaping the optimization landscape, directly influencing convergence speed, stability, and final performance.

Weight initialization has therefore long been recognized as a key component of effective neural network training. Poor initialization can lead to vanishing or exploding gradients, slow convergence, or unstable optimization dynamics \cite{lee2024improved}. Classical initialization schemes such as Xavier or Kaiming He were developed under assumptions tailored to conventional deep architectures and specific activation functions, aiming to preserve activation variance across layers. These methods have proven highly effective for a broad class of tasks, including image classification and language modeling \cite{wong2024analysis}.

However, modern neural architectures designed for function parameterization, such as those employed in the BoF framework, exhibit structural properties that deviate significantly from these assumptions. In such models, the network outputs directly parameterize families of mathematical functions used to reconstruct signals, rather than abstract latent features. As a result, traditional data-agnostic initialization strategies fail to account for the statistical and intrinsic structure of the underlying signals, often leading to slow convergence, unstable training behavior, and large performance variability across runs \cite{torres2025toward}.

While several works have explored alternative initialization strategies based on expert knowledge or domain-specific heuristics to mitigate these issues, these approaches often lack generality. They typically require substantial manual tuning to bridge the gap between the model and the data, limiting their scalability across diverse domains \cite{narkhede2022review}.

In this paper, we propose a data-driven and prior-informed design strategy for neural networks within the Bag-of-Functions framework. Rather than treating initialization as an isolated optimization heuristic, our approach leverages intrinsic spectral and temporal properties of the data to guide both network initialization and architectural configuration. Specifically, we employ the Fast Fourier Transform (FFT) to extract dominant seasonal components, which define the number of residual stages and initialize the respective encoder networks, alongside a residual-based regression strategy to parameterize trend components. By explicitly encoding these priors into both the architecture and the initial parameterization, the proposed method effectively steers the optimization process from the start of training.

This principled structural alignment not only stabilizes training but also enables a substantial reduction in encoder dimensionality without sacrificing reconstruction accuracy. As demonstrated through theoretical analysis and extensive empirical evaluation, the proposed framework leads to faster convergence, reduced parameter drift, improved computational efficiency, and more consistent performance across trials.

The main contributions of this paper are summarized as follows:
\begin{itemize}
    \item We introduce a data-driven design framework for Bag-of-Functions architectures, in which intrinsic spectral and temporal priors explicitly guide both weight initialization and architectural configuration.
    \item We show that aligning model capacity with intrinsic data complexity leads to a more efficient allocation of representational resources, eliminating unnecessary depth and enabling significantly more compact encoders without sacrificing reconstruction fidelity.
    \item We provide analytical support for the proposed framework, establishing reliability guarantees for spectral decomposition and residual-based trend estimation under finite-sample regimes.
    \item Through extensive experiments on synthetic and real-world datasets, we demonstrate that the proposed approach accelerates convergence, reduces parameter drift and performance variability across trials, and consistently outperforms state-of-the-art approaches.
\end{itemize}

The remainder of this paper is organized as follows. Section \ref{sec:related_work} reviews prior work on function parameterization and data-driven initialization. Section \ref{sec:Background_Problem Formulation} establishes the background on the BoF architecture and formally defines the initialization problem. Section \ref{sec:Informed_Initialization_Framework} introduces the general informed initialization framework, while Section \ref{sec:Estimating_Seasonal_Parameters_via_Fourier_Spectrum} details the spectral estimation strategy using the Fourier spectrum. Section \ref{sec:Sample_Size_Requirements_for_Linear_Regression} provides a theoretical analysis of sample size requirements and finite-sample guarantees for regression in noisy time series. Section \ref{sec:Evaluation} describes the experimental setup, dataset details, and presents the comprehensive evaluation results against state-of-the-art benchmarks. Finally, Section \ref{sec:Conclusion} concludes the paper and discusses future research directions.

\section{Related Work}
\label{sec:related_work}

In this section, we review prior work relevant to our proposed approach. We first discuss advancements in function parameterization, with a particular focus on neural network-based models such as the Bag-of-Functions framework. We then review existing techniques for informed initialization, emphasizing methods that incorporate prior knowledge to improve neural network optimization.

\subsection{Function Parameterization}

Function parameterization constitutes a foundational paradigm for representing and modeling complex signals and temporal processes, enabling tasks such as forecasting, reconstruction, and anomaly detection \cite{backhus2025time}. Classical approaches, including polynomial bases and Fourier series, rely on fixed functional forms, which often lack the flexibility required to capture complex, nonstationary patterns in real-world data. Neural networks, by contrast, have emerged as powerful tools for implicit function parameterization, offering significantly enhanced expressive capacity \cite{kong2025deep}.

Prominent neural architectures for time series modeling include N-BEATS, which employs a deep residual structure to perform basis expansion. In its interpretable configuration, N-BEATS explicitly decomposes signals into trend and seasonal components \cite{oreshkin2019n}. Related approaches, such as Implicit Neural Representations (INRs), learn continuous functional mappings directly from data \cite{jayasundara2025mire}. Despite their expressiveness, these models are typically initialized using standard, architecture-dependent schemes rather than priors derived from the intrinsic properties of the signals.

A particularly relevant line of work is the BoF framework, which has been extensively explored for interpretable signal modeling. Early contributions employed BoF representations to generate synthetic time series for pre-training autoencoders in data-scarce regimes \cite{klopries2022synthetic}. Subsequent extensions introduced probabilistic generative formulations, including ITF-VAE and ITF-GAN, enabling the synthesis and manipulation of high-fidelity time series via latent functional representations \cite{klopries2025itf, klopries2024itfgan}. Beyond generative modeling, BoF architectures have been adapted for resampling multi-resolution signals, handling variable-length sequences, and robust unsupervised data imputation \cite{salazar2025resampling, torres2025data}.

Recent work has further investigated architectural refinements within the BoF paradigm, including sparse parameter representations, flexible activation functions, and encoder design optimizations aimed at improving efficiency and expressivity \cite{klopries2023flexible, torres2025toward}. Collectively, these studies demonstrate the versatility and interpretability advantages of BoF-based models across a wide range of signal processing applications.

Despite these advances, most function-parameterizing architectures remain largely decoupled from the empirical structure of the signals they aim to model. While architectural refinements and inductive biases, such as periodic activations in implicit neural representations \cite{sitzmann2020implicit} or Fourier feature mappings to mitigate spectral bias \cite{tancik2020fourier}, have significantly improved expressivity, these approaches encode generic assumptions about signal structure rather than priors extracted from individual data instances. As a result, the incorporation of instance-specific structural information into the overall model design remains just as limited as the data-based complexity control on the number of function to be parameterized. This limitation highlights an unexploited opportunity to leverage classical signal analysis techniques, specifically spectral decomposition and trend estimation, not merely as preprocessing steps but as mechanisms to explicitly inform the initialization and configuration of function-parameterizing networks.

\subsection{Informed Data-driven Initialization}

Beyond purely random or architecture-dependent initialization schemes, a growing body of work has investigated unsupervised, data-driven strategies for neural network initialization. The central idea is to extract informative priors from the training data in order to provide a more favorable starting point for optimization, thereby accelerating convergence and improving final solution quality \cite{narkhede2025data}.

Several approaches leverage statistical or geometric properties of the data. For example, some methods employ clustering techniques such as k-means over candidate weight initializations, selecting centroids associated with lower forward-pass error as initialization points \cite{boongasame2025laor}. Other techniques use Principal Component Analysis (PCA) to align initial weight vectors with directions of maximal data variance, ensuring early sensitivity to dominant signal components \cite{phan2025principal}. While effective in general settings, these approaches are not tailored to architectures whose outputs directly parameterize interpretable functions.

In the context of Bag-of-Functions models, recent studies have explicitly examined the impact of initialization and demonstrated that aligning parameter distributions with valid functional ranges improves training stability and convergence behavior \cite{torres2025toward}. However, such strategies typically rely on fixed or manually designed assumptions about parameter statistics, rather than on priors extracted directly from the observed signal instances.

More broadly, existing data-driven initialization methods primarily operate at the level of statistical distributions or covariance structure, largely overlooking intrinsic signal characteristics such as dominant spectral components or temporal trends. Moreover, initialization is typically treated independently of architectural design choices, leaving open the question of how data-derived priors might jointly inform both the optimization starting point and the allocation of model capacity. Addressing this limitation, by coupling initialization with architecture sizing based on empirically observed spectral and temporal priors, constitutes an open research challenge that the present work seeks to address.

\section{Background and Problem Formulation}
\label{sec:Background_Problem Formulation}

The Bag-of-Functions framework is an encoder--decoder architecture designed for the interpretable decomposition of time series into fundamental components such as seasonality, trend, and events \cite{torres2025toward}. Unlike classic models that learn implicit representations, the BoF approach employs a neural network encoder, $E_\theta$, to explicitly map an input signal $\mathbf{x}_i = [x_i(t_1), \dots, x_i(t_T)]^\top \in \mathbb{R}^T$ to a latent vector of parameters $\mathbf{z}_i = E_\theta(\mathbf{x}_i)$. This parameter vector defines a set of analytic basis functions $\{\phi_a(\mathbf{t}; \mathbf{z}_{i,a})\}_{a=1}^{A}$, with $\mathbf{t} = [t_1,\dots,t_T]^\top$, whose linear combination reconstructs the signal through a functional decoder. In this way, the BoF formulation establishes a bridge between the expressive capacity of neural networks and the transparency of classical signal models, enabling the decomposition of complex signals into semantically meaningful components.  

Building on this formulation, the BoF model employs a residual hierarchy that sequentially isolates these distinct signal components. Each block operates on the residual of the previous block, promoting functional disentanglement and ensuring that each component is represented by its own parameterized basis functions. This residual hierarchy can be extended by stacking multiple BoF stages, each operating on the residual output of the previous one. Such a configuration, referred to as a stacked Bag-of-Functions architecture, allows for progressive refinement of the decomposition, enabling the model to capture higher-order interactions or subtle variations across components while preserving interpretability.

Formally, the decomposition process is defined by three encoder networks, namely $E_{\theta_s}$, $E_{\theta_t}$, and $E_{\theta_e}$, which infer the parameters of the seasonal, trend, and event components, respectively. Each encoder receives as input the residual of the preceding block, generating latent vectors $\mathbf{z}_{s_i}$, $\mathbf{z}_{t_i}$, and $\mathbf{z}_{e_i}$ that parameterize the analytic basis functions used for reconstruction:
\begin{align}
    \mathbf{z}_{s_i} &= E_{\theta_s}(\mathbf{x}_{i}), &
    \hat{\mathbf{x}}_{s_i} &= \sum_{j=1}^{J} \phi_j(\mathbf{t}; \mathbf{z}_{s_{i,j}}),\\
    \mathbf{z}_{t_i} &= E_{\theta_t}(\mathbf{x}_{i} - \hat{\mathbf{x}}_{s_i}), &
    \hat{\mathbf{x}}_{t_i} &= \sum_{k=1}^{K} \phi_k(\mathbf{t}; \mathbf{z}_{t_{i,k}}),\\
    \mathbf{z}_{e_i} &= E_{\theta_e}(\mathbf{x}_{i} - \hat{\mathbf{x}}_{s_i} - \hat{\mathbf{x}}_{t_i}), &
    \hat{\mathbf{x}}_{e_i} &= \sum_{l=1}^{L} \phi_l(\mathbf{t}; \mathbf{z}_{e_{i,l}}).
\end{align}

The overall signal reconstruction is then obtained as
\begin{equation}
    \hat{\mathbf{x}}_{i} = \hat{\mathbf{x}}_{s_i} + \hat{\mathbf{x}}_{t_i} + \hat{\mathbf{x}}_{e_i}.
\end{equation}

The latent representations $\mathbf{z}_{s_i}$, $\mathbf{z}_{t_i}$, and $\mathbf{z}_{e_i}$ provide compact and interpretable parameterizations of each component, capturing oscillatory, structural, and transient dynamics, respectively. Concatenating them yields a unified latent encoding of the signal,
\begin{equation}
    \mathbf{z}_i = [\mathbf{z}_{s_i}, \mathbf{z}_{t_i}, \mathbf{z}_{e_i}].
\end{equation}

The analytical families of basis functions $\phi(\mathbf{t}; \mathbf{z})$ are central to the interpretability of the BoF framework. Each family captures a specific temporal behavior associated with one component of the decomposition. The parameters $\mathbf{z}$ inferred by the encoders correspond to the coefficients or shape factors of these analytic functions, which can be expressed symbolically as $a_j$, the learned counterparts of $\mathbf{z}_{i,a}$ for each basis element.

Following the taxonomy of \cite{torres2025toward}, sinusoidal and harmonic functions model periodic patterns related to seasonality, polynomial and logarithmic forms represent slowly varying trends, and localized kernels such as Gaussian, sigmoidal, or step-like functions describe transient or event-related phenomena. Representative examples are summarized in \autoref{tab:functions}.

\begin{table}[htb]
\centering
\caption{Representative basis functions categorized by component type \cite{torres2025toward}. The symbolic parameters $a_j$ correspond to the learned parameters $\mathbf{z}_{i,a}$ inferred by the BoF encoders.}
\begin{tabular}{lll}
\hline
\textbf{Category} & \textbf{Function} & \textbf{Parameters} \\ \hline
Seasonality 
& $a_1\sin(a_2t + a_3)$ & $a_1, a_2, a_3$ \\
& $a_1\cos(a_2t + a_3)$ & $a_1, a_2, a_3$ \\
& $a_1\mathrm{sinc}(a_2t + a_3)$ & $a_1, a_2, a_3$ \\ \hline
Trend 
& $a_1$ & $a_1$ \\
& $a_1t$ & $a_1$ \\
& $a_1t^2 + a_2t$ & $a_1, a_2$ \\
& $a_1t^3 + a_2t^2 + a_3t$ & $a_1, a_2, a_3$ \\
& $a_1(1 - e^{ta_2})$ & $a_1, a_2$ \\
& $a_1(t+a_2)^{a_3}$ & $a_1, a_2, a_3$ \\
& $a_1\log(t+a_2)$ & $a_1, a_2$ \\ \hline
Events 
& $a_1\mathrm{step}(t, a_2)$ & $a_1, a_2$ \\
& $a_1 e^{-a_2(t-a_3)^2}$ & $a_1, a_2, a_3$ \\
& $a_1\tanh(a_2(t-a_3))$ & $a_1, a_2, a_3$ \\
& $a_1\mathrm{sig}(a_2(t-a_3))$ & $a_1, a_2, a_3$ \\ \hline
\end{tabular}
\label{tab:functions}
\end{table}

Given the parametric nature of the BoF representation, model performance is highly sensitive to the initialization of the encoder weights, $\theta$. Standard initialization schemes, such as Xavier or Kaiming, are inadequate for models whose outputs represent meaningful function parameters rather than abstract feature embeddings. Recent work has addressed this limitation by extending the Kaiming method with a bias term in the final encoder layer, $b^{(L)}$, drawn from a normal distribution. This modification enables the initial output parameters, $\mathbf{z}_i$ at the beginning of training, to be centered around a desired mean $\mu$ and variance $\sigma^2$ \cite{torres2025toward}:

\begin{align}
    b^{(L)} \sim \mathcal{N}(\mu, \sigma^2) \implies \mathbf{z}_i^{(0)} \approx \mathcal{N}(\mu, \sigma^2).
\end{align}

The objective is to select the hyperparameters $(\mu, \sigma^2)$ such that this initial distribution approximates the true, underlying parameter distribution of the dataset, denoted $P^*(\mathbf{z})$. However, let $\mathcal{H} = \{S, N_{\text{in}}, \mu, \sigma^2\}$ define the complete set of structural and initialization hyperparameters, where $S$ denotes the number of stacked decomposition stages and $N_{\text{in}}$ represents the input dimension (receptive field) of the trend encoders. In current practice, the configuration vector $\mathbf{h} \in \mathcal{H}$ is selected heuristically, independent of the signal properties. This lack of data specificity may lead to structural mismatch or statistical inefficiency, resulting in suboptimal convergence or poor alignment between the initial parameter space and the structure of the signals in a given dataset $\mathcal{D}$.

To address this limitation, we propose a principled, data-driven method for determining an improved configuration $\mathbf{h}^* \in \mathcal{H}$ by estimating it directly from the dataset. Specifically, we introduce a framework that establishes a mapping $\mathcal{M}: \mathcal{D} \to \mathcal{H}^*$ based on spectral and statistical analysis. We utilize the cardinality of the dominant spectral modes, $|\mathcal{K}_\tau|$, to formally derive the required number of stages $S$, while a finite-sample regression analysis determines the minimal input size $N_{\text{in}}$ required to bound the estimation variance. Concurrently, empirical estimates $(\hat{\mu}_{\text{data}}, \hat{\sigma}^2_{\text{data}})$ are computed to inform the weight initialization. This yields a model topology and state explicitly tailored to the intrinsic structure of the data, providing a more robust and efficient starting point for training BoF models.

\section{Informed Initialization Framework}
\label{sec:Informed_Initialization_Framework}

The proposed framework extends the Bag-of-Functions architecture through a data-informed strategy that governs both design and initialization. As illustrated in \autoref{fig:framework}, extracted seasonal and trend statistics serve as priors that explicitly determine the structural topology by defining the number of decomposition stages and the trend encoder input sizes, while simultaneously guiding the initialization of the encoder weights. This alignment ensures the model is instantiated with properties consistent with the empirical data structure, crucially enabling the subsequent training phase to resolve the complex nonlinear temporal behaviors and event-driven variations that cannot be represented by purely spectral or linear methods.

\begin{figure}[htb]
    \centering
    \includegraphics[width=\linewidth]{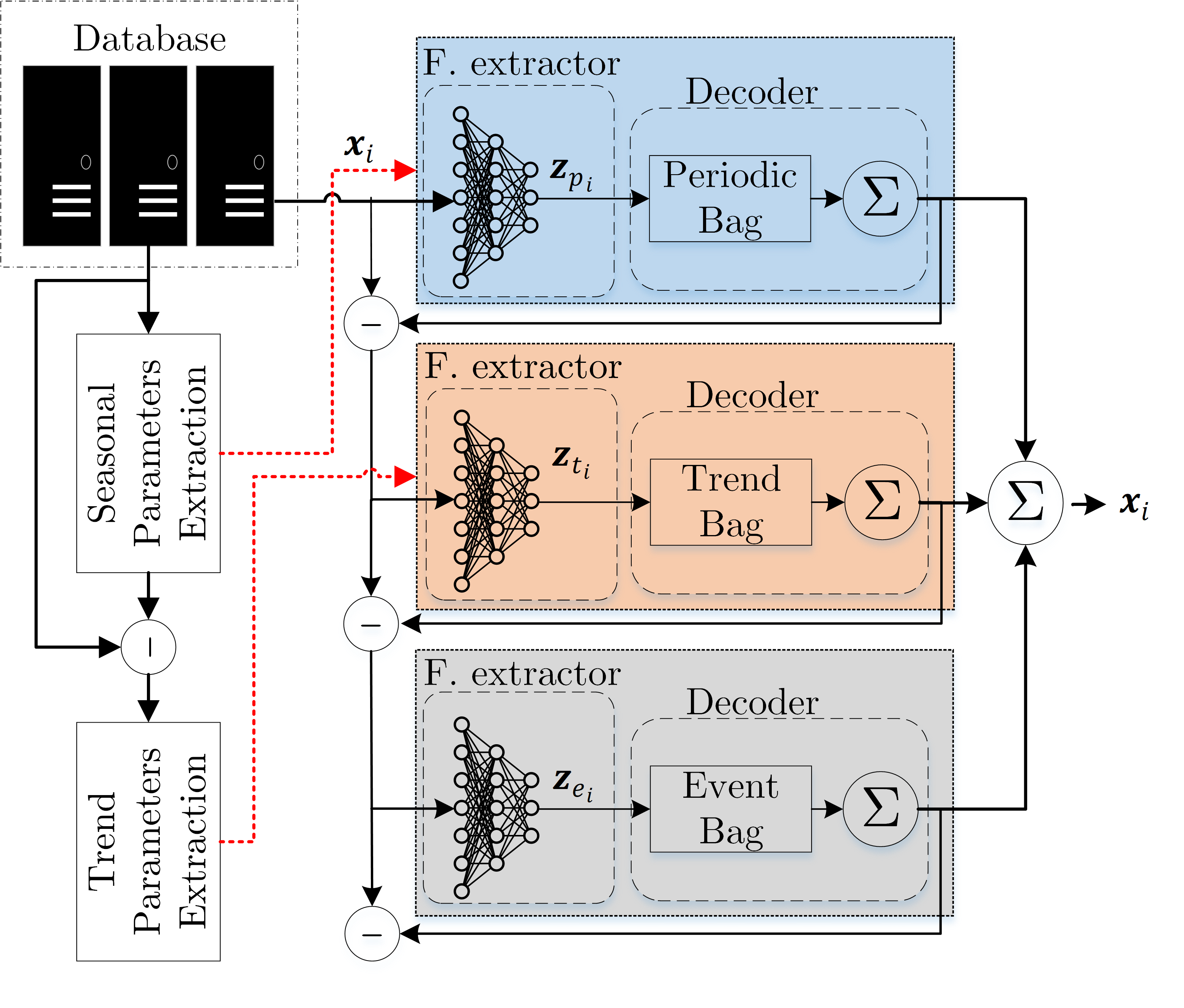}
    \caption{Overview of the proposed data-informed framework. Extracted seasonal and trend statistics serve as priors that jointly determine the architectural topology, including stage depth and input dimensions, and initialize the Bag-of-Functions encoder weights to align the model with the data structure before training.}
    \label{fig:framework}
\end{figure}

Formally, the framework operates sequentially to establish a mapping from the dataset $\mathcal{D}$ to the improved configuration $\mathcal{H}^*$. First, spectral analysis via the Fast Fourier Transform (FFT) identifies the set of dominant frequencies $\mathcal{K}(\tau)$. The cardinality of this set, $|\mathcal{K}(\tau)|$, directly informs the required number of stages $S$, while the empirical mean and variance of the corresponding spectral coefficients constitute the initialization statistics, denoted $(\hat{\mu}_{\text{data}}, \hat{\sigma}^2_{\text{data}})$. These values define the data-driven priors for initializing the seasonal Bag-of-Functions encoder:
\begin{equation}
    \mathbf{z}_{s}^{(0)} \sim \mathcal{N}(\hat{\mu}_{\text{data}}, \hat{\sigma}^2_{\text{data}}).
\end{equation}

Subsequently, the trend information is estimated through linear regression applied to the residual signal obtained after removing the identified seasonal components. This analysis yields two critical outcomes regarding both the architecture and the optimization state. On the structural front, the estimated residual noise variance $\hat{\sigma}_\varepsilon^2$ and the target slope tolerance $\delta$ are used to derive the minimum input dimension $N_{\text{in}}$ required for reliable trend estimation, utilizing the finite-sample guarantees derived in Section \ref{sec:Sample_Size_Requirements_for_Linear_Regression}. Concurrently, the empirical statistics of the slope and bias coefficients update the priors $(\hat{\mu}_{\text{data}}, \hat{\sigma}^2_{\text{data}})$ specifically for the trend encoder. This data-driven alignment ensures that the encoder outputs lie within the valid parameter space of the target signals from the first iteration, mitigating the risk of spectral bias or vanishing gradients associated with random initialization.

The overall procedure for extracting these hyperparameters is summarized in Algorithm \ref{alg:framework}. The algorithm processes the dataset to derive the configuration vector $\mathbf{h}^*$, which is then used to instantiate and initialize the Bag-of-Functions architecture.

\begin{algorithm}
\caption{Data-Driven Configuration and Initialization}
\label{alg:framework}
\begin{algorithmic}[1]
\REQUIRE Dataset $\mathcal{D} = \{y^{(i)}\}_{i=1}^N$ of $N$ time series
\STATE Identify dominant spectral modes $\mathcal{K}$ from dataset $\mathcal{D}$
\STATE Set number of decomposition stages $S \gets |\mathcal{K}|$
\STATE Initialize empty lists $\mathcal{P}_{\text{t}} \gets []$, $\mathcal{P}_{\text{s}} \gets []$
\FOR{$i = 1, \dots, N$}
    \STATE Estimate seasonal parameters 
        $\hat{\theta}_{\text{s}}^{(i)}$ for modes $\mathcal{K}$
    \STATE Compute seasonal component 
        $\hat{s}^{(i)}(t) \gets f_{\text{s}}(t;\hat{\theta}_{\text{s}}^{(i)})$
    \STATE Form residual 
        $\tilde{y}^{(i)}(t) \gets y^{(i)}(t) - \hat{s}^{(i)}(t)$
    \STATE Estimate trend parameters 
        $\hat{\theta}_{\text{t}}^{(i)} = (\hat{a}^{(i)}, \hat{b}^{(i)})$ on $\tilde{y}^{(i)}$
    \STATE Append $\hat{\theta}_{\text{s}}^{(i)}$ to $\mathcal{P}_{\text{s}}$ and $\hat{\theta}_{\text{t}}^{(i)}$ to $\mathcal{P}_{\text{t}}$
\ENDFOR
\STATE Compute input dimension $N_{\text{in}}$ from residual statistics
\STATE Compute empirical priors $(\hat{\mu}_{\text{data}}, \hat{\sigma}^2_{\text{data}})$ from $\mathcal{P}_{\text{s}}$ and $\mathcal{P}_{\text{t}}$
\STATE Configure and initialize Bag-of-Functions architecture
\end{algorithmic}
\end{algorithm}

The practical realization of this framework relies on translating these statistical priors into concrete architectural and parametric specifications. In the following sections, we treat the Fourier spectrum not merely as a feature extractor, but as a topological blueprint: the distribution of spectral energy reveals the signal's intrinsic multi-resolution structure, effectively prescribing the hierarchical depth ($S$) based on signal complexity rather than heuristics. Crucially, this analytical foundation acts as a complementary scaffold rather than a substitute; it allows the Bag-of-Functions model to bypass the learning of trivial linear components and focus its expressive power on resolving the complex non-stationary dynamics and transient events that remain inaccessible to classical methods. Combined with the bias-based initialization adapted from \cite{torres2025toward}, this approach ensures a stable starting state. Accordingly, Section \ref{sec:Estimating_Seasonal_Parameters_via_Fourier_Spectrum} details this spectral mapping strategy, while Section \ref{sec:Sample_Size_Requirements_for_Linear_Regression} establishes the theoretical bounds for stable trend estimation.

\section{Estimating Seasonal Parameters via Fourier Spectrum}
\label{sec:Estimating_Seasonal_Parameters_via_Fourier_Spectrum}

This section introduces a principled framework for estimating seasonal priors from the frequency domain. The analysis identifies dominant periodic components whose empirical statistics are used to initialize the seasonal encoder of the Bag-of-Functions model. Beyond parameter estimation, the spectral distribution of energy reveals the multi-resolution structure of the data, guiding the design of hierarchical seasonal stages. Unlike conventional Fourier methods, which focus on linear reconstruction, this approach leverages spectral descriptors to inform the initialization and configuration of the BoF architecture, enabling it to exploit its ability to model nonlinear and event-driven temporal dynamics.

\subsection{Spectral Modeling and Frequency Selection}

To derive empirical priors for the seasonal encoder, each time series in the dataset is modeled as the superposition of three components:
\begin{equation}\label{eq:input}
    y_t = s_t + t_t + \varepsilon_t, \qquad t = 1, \dots, n,
\end{equation}
where $s_t$ represents the seasonal component associated with periodic fluctuations, $t_t$ denotes the slowly varying trend, and $\varepsilon_t$ is additive noise. The objective of this stage is to extract information about $s_t$ from the frequency domain, providing a compact summary of the dominant periodic behavior in the data.

The spectral content of $y_t$ is analyzed using the Discrete Fourier Transform (DFT),
\begin{equation}
    Y(\omega_k) = \sum_{t=1}^{n} y_t \, e^{-2\pi i \omega_k t}, 
    \qquad \omega_k = \frac{k}{n}, \quad k = 0, 1, \dots, n-1,
\end{equation}
whose squared magnitude defines the periodogram,
\begin{equation}
    I_y(\omega_k) = \frac{1}{n} |Y(\omega_k)|^2.
\end{equation}
The periodogram quantifies the distribution of signal energy across frequencies, enabling the identification of peaks associated with dominant oscillatory patterns. These frequencies capture the primary components of $s_t$, while the trend $t_t$ and noise $\varepsilon_t$ contribute mainly to the low- and high-frequency regions, respectively.

To isolate meaningful periodicities, we employ a relative-power thresholding criterion. Let
\begin{equation}
    P_{\max} = \max_{0 \le k \le \lfloor n/2 \rfloor} I_y(\omega_k)
\end{equation}
be the maximum spectral power, and define a threshold parameter $\tau \in (0,1)$, typically $\tau = 0.2$. The set of dominant frequencies is then
\begin{equation}
    \mathcal{K}(\tau) = \left\{ k : I_y(\omega_k) \ge \tau P_{\max}, \quad 0 \le k \le \lfloor n/2 \rfloor \right\}.
\end{equation}

For each retained frequency $\omega_k \in \mathcal{K}(\tau)$, the amplitude $A_k = 2|C_k|$ and phase $\phi_k = \arg(C_k)$ are computed from $C_k = \frac{1}{n}Y(\omega_k)$. These quantities represent coarse estimates of the seasonal characteristics of the dataset. The empirical distributions of $\{A_k, \omega_k, \phi_k\}$ across all time series define the statistical priors $(\hat{\mu}_{\text{data}}, \hat{\sigma}^2_{\text{data}})$, which are used to initialize the bias parameters of the BoF seasonal encoder. Moreover, the global spectral distribution provides an estimate of how many distinct frequency bands are relevant, offering a data-driven guideline for configuring the number of seasonal stages in stacked BoF architectures.

\subsection{Spectral Energy Ratio as a Diagnostic Metric}

To quantify the informativeness of the identified frequencies, we compute the proportion of signal energy explained by the retained components:
\begin{equation}
    \rho_{\text{spec}} = \frac{\sum_{k \in \mathcal{K}(\tau)} |C_k|^2}{\sum_{k=0}^{n-1} |C_k|^2}.
\end{equation}
A high value of $\rho_{\text{spec}}$ indicates that the time series exhibits strong periodicity, suggesting that the spectral priors are reliable indicators for encoder initialization. Conversely, a low $\rho_{\text{spec}}$ implies weak seasonality, in which case the model relies more heavily on trend and event encoders during training. This measure thus serves as a practical diagnostic tool for evaluating the spectral richness of the dataset and for adapting the initialization strength of the seasonal stage. The deseasonalized residual obtained after this analysis serves as the input for estimating trend parameters, as described in the following section.

\section{Sample Size Requirements for Linear Regression in Noisy Time Series}
\label{sec:Sample_Size_Requirements_for_Linear_Regression}

In order to capture the underlying trend across the dataset, we apply a linear regression model to each time series sample. This procedure provides a simple and interpretable way to characterize the local direction and strength of change, which can then be aggregated or used for the informed initialization of the Bag-of-Functions architecture. While many time series exhibit additional seasonal or cyclical patterns, the linear regression applied at the segment level serves as a first-order approximation of the local trend, effectively separating long-term drift from random noise.

Since the number of regressions to be performed can be extremely large, it becomes crucial to understand how many observations within each segment are actually required to obtain reliable parameter estimates. In particular, we are interested in quantifying the minimum segment length needed to estimate the slope and bias with a prescribed level of accuracy and confidence, thereby providing a principled trade-off between statistical precision and computational or data-collection cost. Moreover, this analysis also offers practical guidance for model design: the minimum segment length directly translates into the temporal window size required by the trend encoder, defining the effective receptive field of the neural network necessary to capture stable linear dynamics in noisy environments.

\subsection{Model Setup}

We consider a time series segment of length $n$ with observations
\begin{align}
    y_i = a + b\, t_i + \varepsilon_i, \qquad i = 1,2,\ldots,n,
\end{align}

where $a$ is the bias, $b$ is the slope, $t_i$ denotes the time index (assumed equally spaced), and $\varepsilon_i$ are stochastic noise terms with zero mean. Note that we subtract the seasonal components from \myeqref{eq:input} as previously defined.

The ordinary least squares (OLS) estimator $\hat\beta = (\hat a, \hat b)^\top$ minimizes the squared residuals and admits the closed form
\begin{align}
    \hat\beta = (X^\top X)^{-1} X^\top y,
\end{align}
where $X$ is the design matrix with rows $(1, t_i)$.

A central quantity in this analysis is the centered sum of squares of the time indices,
\begin{align}
    S_{xx} = \sum_{i=1}^n (t_i - \bar t)^2, \qquad 
    \bar t = \frac{1}{n} \sum_{i=1}^n t_i.
\end{align}
This term quantifies how widely the design points are distributed along the time axis.  
A larger spread (higher $S_{xx}$) leads to more precise estimates of the slope $b$, since the regression has greater leverage to separate signal from noise.

When the time indices are equally spaced, i.e., $t_i = i$ for $i=1,\ldots,n$, the mean time index is
\begin{align}
    \bar t = \frac{1}{n}\sum_{i=1}^n i = \frac{n+1}{2}.
\end{align}
Substituting this into the definition of $S_{xx}$ gives
\begin{align}
    S_{xx} &= \sum_{i=1}^n \left(i - \frac{n+1}{2}\right)^2 
    = \sum_{i=1}^n i^2 - n\left(\frac{n+1}{2}\right)^2. \label{eq:sxx_expand}
\end{align}
Using the standard identity for the sum of squares,
\begin{align}
    \sum_{i=1}^n i^2 = \frac{n(n+1)(2n+1)}{6},
\end{align}
and substituting into \myeqref{eq:sxx_expand} yields
\begin{align}
    S_{xx} 
    &= \frac{n(n+1)(2n+1)}{6} - n\left(\frac{n+1}{2}\right)^2 
    = \frac{n(n^2 - 1)}{12}.
\end{align}

For large $n$, this expression scales as $S_{xx} \sim \frac{n^3}{12}$, indicating that the effective spread of the design and, consequently, the statistical leverage of the regression, increase cubically with the number of observations. This property directly determines the rate at which the slope estimator converges in the subsequent analysis. The closed-form expression for $S_{xx}$ derived here will serve as the foundation for quantifying the variance and concentration properties of the slope estimator in the next subsection.

\subsection{Estimation Error of the Slope}

From the general OLS solution $\hat\beta = (X^\top X)^{-1}X^\top y$ and the linear model $y = X\beta + \varepsilon$, we can express the estimation error as
\begin{align}
    \hat\beta - \beta = (X^\top X)^{-1}X^\top \varepsilon.
\end{align}
For the slope component, corresponding to the second entry of $\hat\beta$, this gives
\begin{align}
    \hat b - b = e_2^\top (X^\top X)^{-1}X^\top \varepsilon,
\end{align}
where $e_2 = (0,1)^\top$ selects the slope coordinate.  
Using the structure of the design matrix
\[
X =
\begin{bmatrix}
1 & t_1 \\
\vdots & \vdots \\
1 & t_n
\end{bmatrix},
\]
and applying the standard OLS algebra, one obtains the explicit scalar form
\begin{align}
    \hat b = \frac{\sum_{i=1}^n (t_i-\bar t)y_i}{\sum_{i=1}^n (t_i-\bar t)^2}.
\end{align}
Substituting $y_i = a + b t_i + \varepsilon_i$ into this expression yields
\begin{align}
    \hat b - b
    &= \frac{\sum_{i=1}^n (t_i-\bar t)(a + b t_i + \varepsilon_i)}{S_{xx}} - b \\
    &= \frac{a\sum_{i=1}^n (t_i-\bar t) + b\sum_{i=1}^n (t_i-\bar t)^2 + \sum_{i=1}^n (t_i-\bar t)\varepsilon_i}{S_{xx}} - b.
\end{align}
The first term vanishes because $\sum_i (t_i-\bar t)=0$, and the second term cancels with $b$ after dividing by $S_{xx}$.  
Thus we obtain the key relation
\begin{align}
    \hat b - b = \frac{1}{S_{xx}}\sum_{i=1}^n (t_i - \bar t)\,\varepsilon_i. \label{eq:slope_error}
\end{align}

Equation \eqref{eq:slope_error} shows that the slope estimation error is a weighted average of the noise terms, where the weights depend linearly on the centered time indices. Observations farther from the mean time point contribute more heavily to the estimate, which intuitively explains why a wider time span yields a more stable slope estimate.

Assuming that the noise terms are independent with $\mathbb{E}[\varepsilon_i]=0$ and $\mathrm{Var}(\varepsilon_i)=\sigma^2$, the variance of the slope estimator follows directly:
\begin{align}
    \mathrm{Var}(\hat b)
    &= \mathrm{Var}\!\left(\frac{1}{S_{xx}}\sum_{i=1}^n (t_i - \bar t)\varepsilon_i\right) \\
    &= \frac{1}{S_{xx}^2} \sum_{i=1}^n (t_i - \bar t)^2 \mathrm{Var}(\varepsilon_i) \\
    &= \frac{\sigma^2}{S_{xx}}.
\end{align}
Substituting the equidistant design result $S_{xx} = n(n^2 - 1)/12$ gives the asymptotic behavior
\begin{align}
    \mathrm{Var}(\hat b) \sim \frac{12\sigma^2}{n^3}, \qquad n \to \infty.
\end{align}
Hence, the standard deviation of $\hat b$ decreases at the rate $n^{-3/2}$—a much faster convergence than the usual $n^{-1/2}$ rate obtained when estimating a mean.  
This acceleration arises because slope estimation benefits from the cubic increase in the time-index spread, which amplifies the information gained from additional observations.

\subsection{Concentration Bounds and Finite-Sample Guarantees}

While variance calculations describe asymptotic precision, they do not quantify the reliability of estimates for finite sample sizes. To obtain non-asymptotic guarantees, we study the distribution of the slope estimator under sub-Gaussian noise assumptions.

\begin{ass}
\label{assump}
The noise variables $\varepsilon_i$ are independent, centered, and sub-Gaussian with variance proxy $\sigma^2$, i.e.,
\begin{align}
    \mathbb{E}[e^{\lambda \varepsilon_i}] \le \exp\!\left(\frac{\lambda^2\sigma^2}{2}\right), \quad \forall \lambda \in \mathbb{R}.
\end{align}
This assumption includes the Gaussian case but also accommodates other light-tailed distributions.
\end{ass}

\begin{thm}
\label{thm:thefirst}
Under \myassumption{assump}, for any tolerance level $\delta > 0$,
\begin{align}
    \mathbb{P}(|\hat b - b| > \delta) \le 2 \exp\!\left(-\frac{\delta^2 S_{xx}}{2\sigma^2}\right).
\end{align}
\end{thm}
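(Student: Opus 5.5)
The plan is a standard Chernoff argument applied to the error representation \eqref{eq:slope_error}. Write $\hat b - b = \sum_{i=1}^n w_i \varepsilon_i$ with deterministic weights $w_i = (t_i-\bar t)/S_{xx}$. These satisfy
\[
\sum_{i=1}^n w_i^2 = \frac{1}{S_{xx}^2}\sum_{i=1}^n (t_i-\bar t)^2 = \frac{1}{S_{xx}}.
\]
The whole proof then reduces to showing that this weighted sum is sub-Gaussian with variance proxy $\sigma^2/S_{xx}$.

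\textbf{Step 1 (moment generating function).} Fix $\lambda \in \mathbb{R}$. By independence in \myassumption{assump}, the MGF factorizes:
\[
\mathbb{E}[e^{\lambda(\hat b - b)}] = \prod_{i=1}^n \mathbb{E}[e^{(\lambda w_i)\varepsilon_i}].
\]
Applying the sub-Gaussian bound to each factor with parameter $\lambda w_i$ gives
\[
\mathbb{E}[e^{\lambda(\hat b - b)}] \le \exp\!\left(\frac{\lambda^2\sigma^2}{2}\sum_i w_i^2\right) = \exp\!\left(\frac{\lambda^2\sigma^2}{2S_{xx}}\right).
\]
Each factor may be bounded this way because the assumption holds for every real $\lambda$, so in particular for $\lambda w_i$ even when $w_i$ is negative.

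\textbf{Step 2 (one-sided tail).} For $\lambda > 0$, Markov's inequality applied to $e^{\lambda(\hat b - b)}$ gives
\[
\mathbb{P}(\hat b - b > \delta) \le \exp\!\left(-\lambda\delta + \frac{\lambda^2\sigma^2}{2S_{xx}}\right).
\]
Minimizing over $\lambda$ by choosing $\lambda = \delta S_{xx}/\sigma^2$ yields
\[
\mathbb{P}(\hat b - b > \delta) \le \exp\!\left(-\frac{\delta^2 S_{xx}}{2\sigma^2}\right).
\]

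\textbf{Step 3 (lower tail and union bound).} The lower tail follows by the same argument applied to $-(\hat b - b)=\sum_i(-w_i)\varepsilon_i$, which has the same variance proxy. A union bound over the two tails then gives the factor $2$ and the claimed inequality. One technical point: the event $\{|\hat b - b| > \delta\}$ is contained in the union of $\{\hat b - b > \delta\}$ and $\{\hat b - b < -\delta\}$, so the strict versus non-strict inequality does not matter.

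No real obstacle is expected. The only step requiring care is Step 1, where independence is essential to factorize the MGF, and the identity $\sum_i w_i^2 = 1/S_{xx}$ is what converts the per-term proxies into the stated exponent. The bound holds for any design with $S_{xx} > 0$. Substituting $S_{xx} = n(n^2-1)/12$ from the equidistant case gives the explicit $n^3$ rate, consistent with the variance computation above.
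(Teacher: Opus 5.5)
Your proposal is correct and follows the same route as the paper: write $\hat b - b=\sum_i w_i\varepsilon_i$ with $\sum_i w_i^2 = 1/S_{xx}$, conclude that this sum is sub-Gaussian with variance proxy $\sigma^2/S_{xx}$, and apply the two-sided sub-Gaussian tail bound. The only difference is that you prove these two facts inline, namely the closure property (via MGF factorization) and the tail inequality (via Markov's inequality with $\lambda=\delta S_{xx}/\sigma^2$ and a union bound); the paper simply cites them, the latter somewhat loosely as following ``by definition'' of sub-Gaussianity.
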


\begin{proof}
From \myeqref{eq:slope_error}, the slope estimation error can be written as
\begin{align}
    \hat b - b = \frac{1}{S_{xx}}\sum_{i=1}^n (t_i-\bar t)\varepsilon_i
               = \sum_{i=1}^n w_i \varepsilon_i,
\end{align}
where the weights $w_i = (t_i - \bar t)/S_{xx}$ are deterministic and satisfy $\sum w_i^2 = 1/S_{xx}$.

By the standard closure property of sub-Gaussian variables, any weighted sum of independent sub-Gaussian terms remains sub-Gaussian. In particular, if each $\varepsilon_i$ is sub-Gaussian with variance proxy $\sigma^2$, then the sum
\[
Z = \sum_{i=1}^n w_i \varepsilon_i
\]
is sub-Gaussian with variance proxy
\[
\sigma_Z^2 = \sigma^2 \sum_{i=1}^n w_i^2 = \frac{\sigma^2}{S_{xx}}.
\]

By definition of sub-Gaussianity, for any $u>0$,
\begin{align}
    \mathbb{P}(|Z| > u) \le 2\exp\!\left(-\frac{u^2}{2\sigma_Z^2}\right)
    = 2\exp\!\left(-\frac{u^2 S_{xx}}{2\sigma^2}\right).
\end{align}

Finally, substituting back $Z = \hat b - b$ and setting $u = \delta$ yields the stated concentration inequality.
\end{proof}

\mythm{thm:thefirst} provides a direct link between the design spread $S_{xx}$, the noise level $\sigma^2$, and the desired accuracy $\delta$. The larger the effective spread $S_{xx}$, the tighter the estimator concentrates around the true slope.

\medskip

A practical corollary gives an explicit sample-size requirement.  
To ensure that $|\hat b - b| \le \delta$ with probability at least $1-\alpha$, it suffices that
\begin{align}
    S_{xx} \ge \frac{2\sigma^2 \log(2/\alpha)}{\delta^2}. \label{eq:sxx_requirement}
\end{align}
In the equidistant case $t_i=i$, we have $S_{xx} = n(n^2-1)/12 \approx n^3/12$, and substituting into \myeqref{eq:sxx_requirement} gives
\begin{align}
    n \ge \left(\frac{24\sigma^2 \log(2/\alpha)}{\delta^2}\right)^{1/3}. \label{eq:n_requirement}
\end{align}
Hence, the required segment length grows only as the two-thirds power of the signal-to-noise ratio $(\sigma/\delta)$. This is considerably milder than the quadratic scaling typical of mean estimation, reflecting the higher efficiency of slope estimation due to the cubic growth of $S_{xx}$.

\medskip

For illustration, consider a target tolerance $\delta = 0.1$, confidence level $95\%$ ($\alpha = 0.05$), and noise variance $\sigma^2 = 1$.
Substituting into \myeqref{eq:n_requirement} gives
\begin{align}
    n \ge \left(\frac{24\,\ln(40)}{0.1^2}\right)^{1/3} \approx 20.7.
\end{align}
Thus, a segment length of \textbf{$n = 21$} equidistant observations is sufficient to guarantee the desired estimation accuracy under independent Gaussian noise. This provides a practical lower bound for the temporal window size required by the trend encoder in the Bag-of-Functions framework.

\subsection{Bias Estimation}

We now turn to the bias estimator $\hat a$. Unlike the slope, the bias does not directly benefit from the design spread of the time indices. Instead, it primarily reflects the average noise level across the segment, which leads to a different scaling behavior. From the standard OLS relation $\hat a = \bar y - \hat b \bar t$, substituting $y_i = a + b t_i + \varepsilon_i$ gives
\begin{align}
    \hat a - a = \frac{1}{n}\sum_{i=1}^n \varepsilon_i - (\hat b - b)\bar t.
\end{align}
The first term corresponds to the sample mean of the noise, while the second term captures the propagated uncertainty from the slope estimation, scaled by the mean of the time indices. In the equidistant case, these two components are uncorrelated, and thus their variances add.

Assuming independent errors with variance $\sigma^2$, we obtain
\begin{align}
    \mathrm{Var}(\hat a) = \sigma^2\left(\frac{1}{n} + \frac{\bar t^2}{S_{xx}}\right).
\end{align}
The first term arises from the sample mean variance $\mathrm{Var}(\bar\varepsilon) = \sigma^2/n$, while the second follows from $\mathrm{Var}(\hat b - b) = \sigma^2/S_{xx}$, scaled by $\bar t^2$. Cross-terms vanish due to the orthogonality of the OLS decomposition.

For equidistant time indices $t_i=i$, we have $\bar t = (n+1)/2$ and $S_{xx} \sim n^3/12$. Substituting these expressions yields
\begin{align}
    \mathrm{Var}(\hat a) \sim \frac{\sigma^2}{n}, \qquad n \to \infty.
\end{align}
Thus, the bias estimator converges at the classical $n^{-1/2}$ rate, identical to the sample mean. Unlike the slope, it does not benefit from the cubic increase of $S_{xx}$, since it captures the baseline level rather than the rate of change.

From a practical standpoint, this distinction is relevant. To estimate the slope within tolerance $\delta$, one needs on the order of $n \asymp (\sigma/\delta)^{2/3}$ observations, whereas for the bias the requirement increases to $n \asymp (\sigma/\delta)^2$. Hence, achieving a high-precision bias estimate is significantly more demanding than estimating the slope.

\subsection{Implications for Architecture Design}
\label{subsec:Implications_for_Architecture}

While the asymptotic properties of the ordinary least squares estimator are classical results in statistical theory, their application within the context of neural network design offers an alternative to the heuristic selection of hyperparameters. Typically, the input size of a trend encoder is determined via costly grid searches or arbitrary rule-of-thumb choices. The derivation provided in Theorem \ref{thm:thefirst} establishes a direct analytical link between the physical properties of the signal (noise-to-slope ratio) and the structural requirements of the architecture.

A critical insight from this analysis is the disparity in convergence rates between the slope ($\sim n^{-3/2}$) and the bias ($\sim n^{-1/2}$). In the context of the Bag-of-Functions architecture, the primary goal of the trend encoder is to capture the dynamics of the signal, represented by the slope, rather than its absolute baseline level. The cubic scaling of the design spread $S_{xx}$ implies that the encoder can reliably learn local derivatives with relatively compact temporal windows. For instance, the derived bound suggests that a window of approximately $n \approx 20$ samples is sufficient to stabilize the gradient estimation even under significant noise ($\sigma \approx 1$), whereas relying on bias convergence would require significantly larger inputs.

This theoretical finding validates the efficiency of the proposed framework: by allowing the linear regression layer to focus on the rapidly converging slope component, we can minimize the dimensionality of the input layer $N_{\text{in}}$ without sacrificing precision. The slower-converging bias component acts merely as a bias term, which neural networks are naturally equipped to correct through their internal bias weights during the training phase via backpropagation. Consequently, \myeqref{eq:n_requirement} serves as a formal lower bound for the receptive field size, ensuring that the model is instantiated with sufficient informational capacity to distinguish trends from noise prior to any optimization.

\section{Evaluation}
\label{sec:Evaluation}

This section presents the experimental evaluation of the proposed framework. 
The analysis is structured in three parts. 
First, the datasets used for training and validation are described, including a controlled synthetic dataset and two real-world datasets representing distinct signal domains. 
Next, the experimental setup outlines the implementation details, parameter configurations, and evaluation protocol employed. 
Finally, the results section reports both quantitative and qualitative findings, highlighting the effectiveness of the prior-informed initialization and the interpretability achieved by the Bag-of-Functions architecture. 
The evaluation further demonstrates how data-derived information guides key architectural choices, such as the number of stages and the encoder window size, thereby achieving a principled balance between interpretability and efficiency.

\subsection{Dataset}

To validate the proposed data-informed strategy for architectural design and initialization, we employ both synthetic and real-world benchmarks. These datasets encompass diverse temporal dynamics, enabling a systematic evaluation of how extracted spectral and trend priors effectively guide the instantiation of the BoF encoders.

The synthetic dataset was constructed to provide a reference benchmark with known spectral and temporal parameters. Each signal \( y_i(t) \) combines periodic components, a linear trend, and a localized transient event, modeled as
\begin{equation}
\begin{split}
    y_i(t) = &\; \sum_{k=1}^{3} A_{i,k} 
    \sin\!\big( 2\pi f_{i,k} t + \phi_{i,k} \big)
    + (\alpha_{i,0} + \alpha_{i,1} t) \\[1mm]
    &\; + \beta_i 
    \exp\!\left[-\frac{(t - \mu_i)^2}{2\sigma_{e}^2}\right]
    + \varepsilon_i(t).
\end{split}
\end{equation}
The spectral parameters are drawn independently, with frequencies \( f_{i,k} \) normally distributed around centers \(\{3.5, 7.5, 12.0\}\,\text{Hz}\), amplitudes \( A_{i,k} \) following a positive normal distribution, and phases \( \phi_{i,k} \) uniform in \([0, 2\pi]\). Both trend and event parameters are sampled from bounded uniform distributions to introduce variability, while the additive noise follows \( \varepsilon_i(t) \sim \mathcal{N}(0, 0.01^2) \). Comprising \( N = 2000 \) signals sampled at \( T = 100 \) equidistant points over the normalized interval \(t \in [0, 1]\), this dataset enables a direct quantitative evaluation of the estimated priors against the known ground truth.

To complement this synthetic benchmark, we utilize two real-world datasets representing divergent temporal challenges. The first, the PJM Interconnection dataset (1998–2001), captures macro-scale grid dynamics through hourly demand records partitioned into weekly segments to isolate recurrent consumption patterns. The second comprises thermal power plant (TPP) output from a district heating network, sampled at hourly intervals between 2016 and 2020. By focusing on the winter subset of the TPP data, we isolate the prominent periodic and trend components inherent to heating demand, allowing the framework to demonstrate its ability to extract meaningful architectural priors across diverse and complex empirical domains.

\subsection{Experimental Setup}

%The experimental methodology established in this study follows a two-stage pipeline designed to validate the integration of structural signal priors into the neural architecture. The process begins with a data-driven diagnosis phase, utilizing spectral and trend analyses to characterize the underlying intrinsic properties of the raw training data. These extracted statistics serve as the direct inputs for configuring the model's architecture and initialization. Following this configuration, we proceed to the training and evaluation phase to quantify the gains in convergence speed, stability, and predictive accuracy.

To assess the impact of the extracted priors on both the architectural design decisions and the parameter initialization strategies, we benchmark four distinct configurations of the Bag-of-Functions network. The first variant, denoted as BoF, serves as the domain-agnostic baseline where the encoders are instantiated using standard default initialization schemes, representing a scenario devoid of any specific structural priors. The second variant, Heuristics BoF (H-BoF), adopts the initialization philosophy proposed in \cite{torres2025toward}; however, while this approach utilizes specific heuristics for initialization, it remains constrained to a fixed architecture that does not incorporate design decisions derived from the dataset's specific characteristics. Moving toward structurally informed configurations, the Informed BoF (I-BoF) architecture extends this initialization strategy by replacing generic heuristics with data-driven values. Crucially, this variant leverages the spectral analysis to dynamically determine the number of stages of the model and to initialize the seasonal encoders, while the trend component retains a standard configuration. Finally, the Informed Trend BoF (IT-BoF) represents the complete proposed framework where the architecture is fully adaptive: the depth is defined by the frequency analysis, and the trend encoder input size is dimensioned according to the residual trend volatility. Furthermore, this network is fully initialized with both the spectral seasonal priors and the OLS-derived trend priors, ensuring the optimization starts in close proximity to the true signal parameters. To account for the inherent stochasticity of weight initialization and the optimization landscape, we executed 10 independent training trials for each dataset-model pair using distinct random seeds. All models were trained using the Adam optimizer with a learning rate of $\eta=1\times10^{-3}$ and a batch size of $B=16$, governed by a Mean Squared Error (MSE) loss function with an early stopping mechanism to prevent overfitting.

To evaluate the computational efficiency and practical suitability of the proposed architectures, we conducted a standardized benchmarking protocol on a workstation equipped with an Intel Core i7-11700 CPU, 64 GB RAM, and an NVIDIA RTX 5000 GPU. All models were implemented in PyTorch, and computational complexity was estimated using the thop library. Inference performance was measured under controlled conditions using real input samples from the test sets to ensure representative runtime behavior. %The evaluation reports model size (number of learnable parameters), computational complexity (KFLOPs), inference latency, and throughput, providing a comprehensive assessment of efficiency and scalability.

\subsection{Data-Driven Prior Extraction and Architecture Configuration}

We apply the prior extraction framework to the synthetic and real-world datasets in order to instantiate the Bag-of-Functions architecture before training. The objective is twofold: (i) to derive empirical priors for encoder initialization, and (ii) to determine the architectural depth and trend encoder input size directly from data.

Spectral estimation is first applied to the synthetic dataset described in \autoref{sec:Evaluation}. \autoref{fig:FFT_Synthetic} reports the average periodogram and the spectral heatmap across all $N=2000$ samples. Applying the relative-power thresholding criterion ($\tau=0.2$) identifies three dominant modes at $3.48$, $7.56$, and $12.29$ Hz, closely matching the ground-truth frequencies.
These components account for a cumulative spectral energy ratio of $\rho_{\text{spec}} = 0.873$, indicating strong periodic structure. Accordingly, the BoF architecture is instantiated with $|\mathcal{K}|=3$ stages, each initialized using the empirical statistics of the corresponding spectral band.

\begin{figure}
    \centering
    \includegraphics[width=1\linewidth]{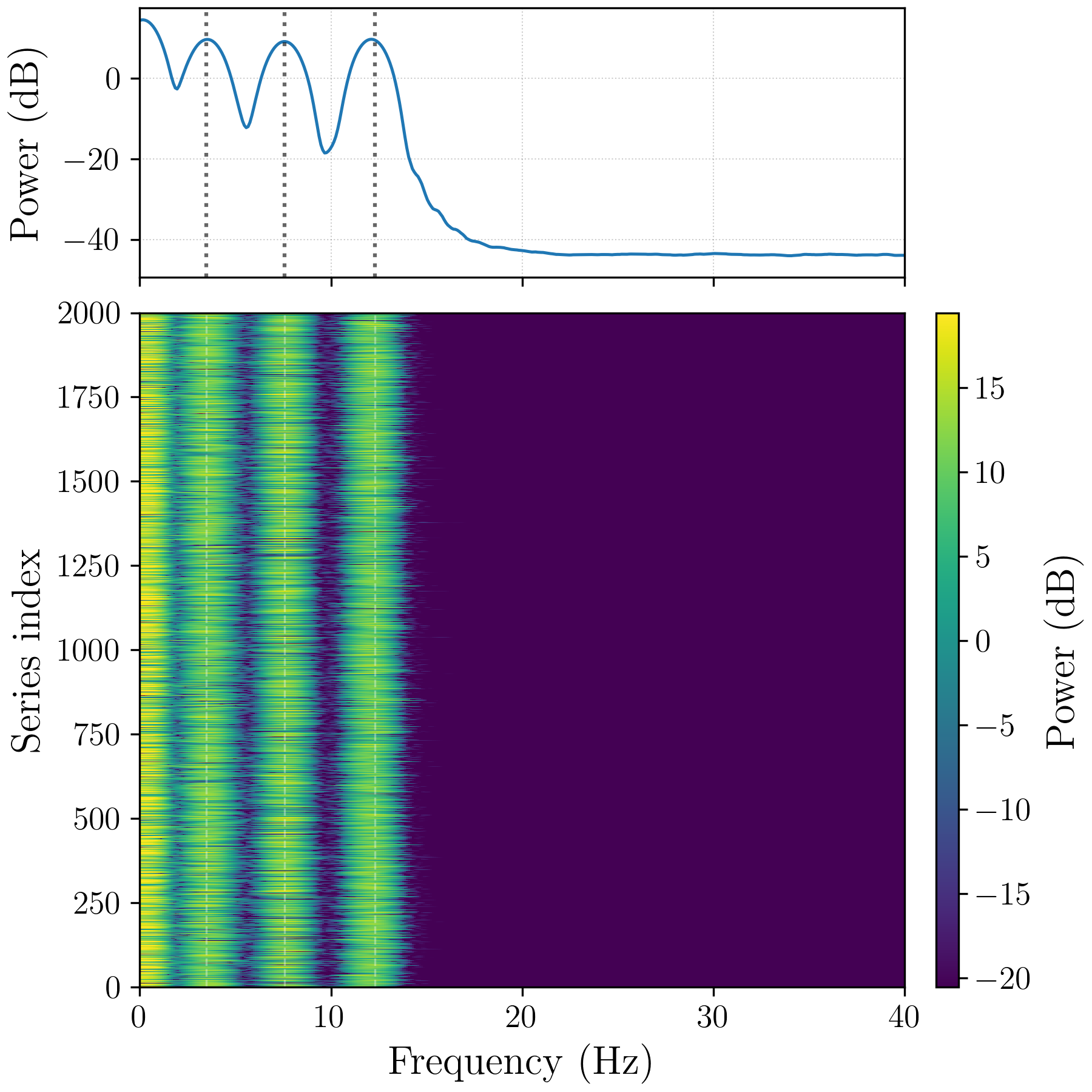}
    \caption{Spectral analysis of the synthetic dataset used for prior extraction. The average periodogram identifies dominant modes at $3.48$ Hz, $7.56$ Hz, and $12.29$ Hz, complemented by the spectral heatmap across all $N=2,000$ samples. The cumulative energy ratio reaches $\rho_{\text{spec}} = 0.873$ by combining these three identified modes.}
    \label{fig:FFT_Synthetic}
\end{figure}

Following seasonal extraction, a coarse deseasonalization is performed to estimate trend priors. As illustrated in \autoref{fig:Trend_estimation}, the removal of dominant harmonics exposes the underlying drift, enabling a reliable estimation of the residual noise variance. Using the finite-sample bound derived in \autoref{eq:n_requirement} with tolerance $\delta=0.2$, the resulting stability criterion yields an optimal trend input size of $n_{opt}=11$. These observations are selected equidistantly to maximize the design spread $S_{xx}$, yielding accurate slope estimates while reducing the input dimensionality by $89\%$ compared to a full-window approach.

\begin{figure}[htb]
     \centering
     \begin{subfigure}[b]{0.49\linewidth}
         \centering
         \includegraphics[width=1\textwidth]{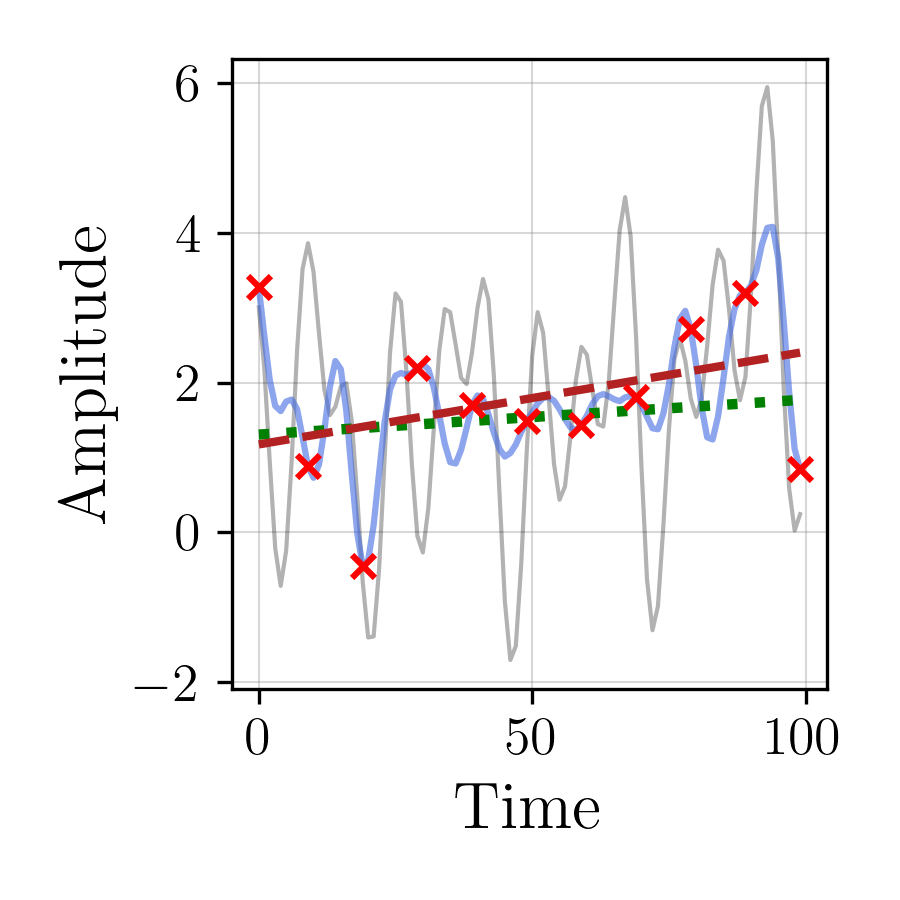}

         \label{fig:reconstr0}
     \end{subfigure}
     \begin{subfigure}[b]{0.49\linewidth}
         \centering
         \includegraphics[width=\textwidth]{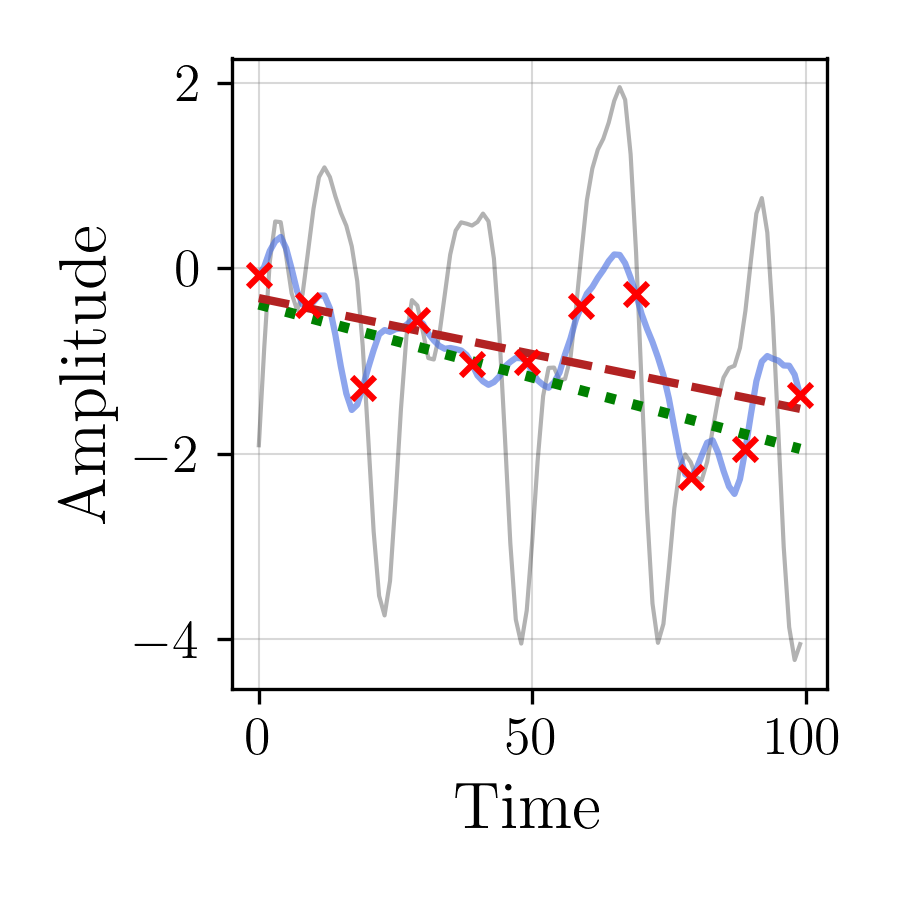}

         \label{fig:reconstr1}
     \end{subfigure}
        \caption{Visual validation of the trend estimation on two dataset samples. 
    The plots display the input signal (gray), the deseasonalized input (blue), and the ground truth trend (green dotted). 
    The red markers indicate the specific observations selected for the OLS regression, yielding the estimated trend (red dashed).}
        \label{fig:Trend_estimation}
\end{figure}

The same procedure is applied to the PJM Hourly and Thermal Power Plant (TPP) datasets.
For PJM, using a weekly window ($W=168$), spectral analysis identifies two dominant harmonics at $6.97$ and $13.99$ cycles/week, as illustrated in \autoref{fig:FFT_ET}, capturing $\rho_{\text{spec}}=0.960$ of the oscillatory energy. Low-frequency components are attributed to trend behavior and handled by the linear encoder. Consequently, a two-stage architecture is sufficient.

\begin{figure}
    \centering
    \includegraphics[width=1\linewidth]{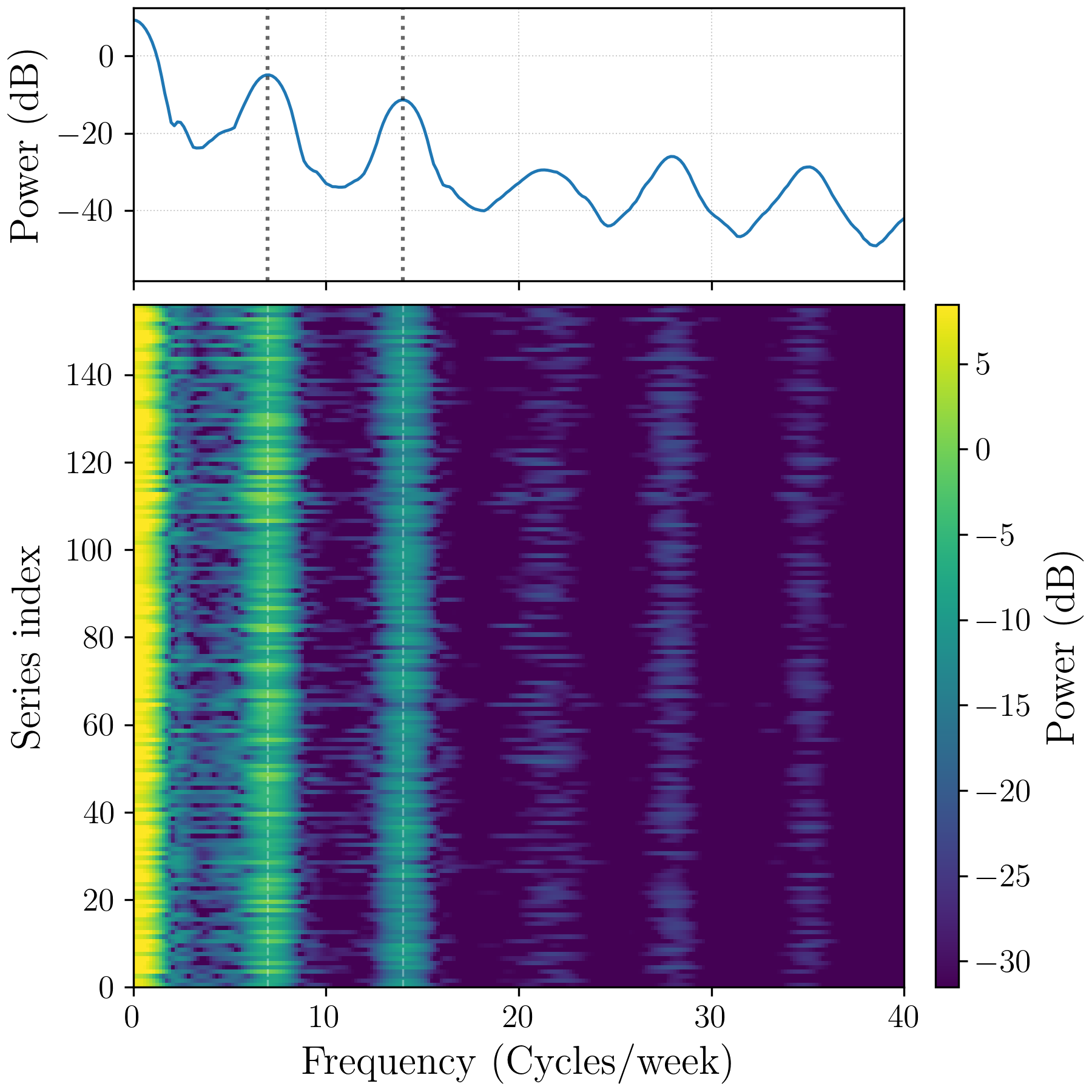}
    \caption{Spectral analysis of the PJM Hourly dataset used for prior extraction. The average periodogram identifies dominant modes at $6.97$ and $13.99$ cycles/week, while the accompanying spectral heatmap displays the energy distribution across all 156 samples. The cumulative energy ratio reaches $\rho_{\text{spec}} = 0.960$ by combining the two identified dominant modes.}
    \label{fig:FFT_ET}
\end{figure}

In contrast, the TPP dataset exhibits a richer spectral structure (\autoref{fig:FFT_TPP}), with four significant modes at $1.42$, $6.95$, $14.00$, and $21.20$ cycles/week. Notably, the low-frequency component at $1.42$ cycles/week contributes $21.8\%$ of the total energy and therefore requires explicit modeling as a seasonal stage. The resulting configuration adopts $|\mathcal{K}|=4$, achieving near-complete spectral coverage ($\rho_{\text{spec}}=0.991$).

\begin{figure}
    \centering
    \includegraphics[width=1\linewidth]{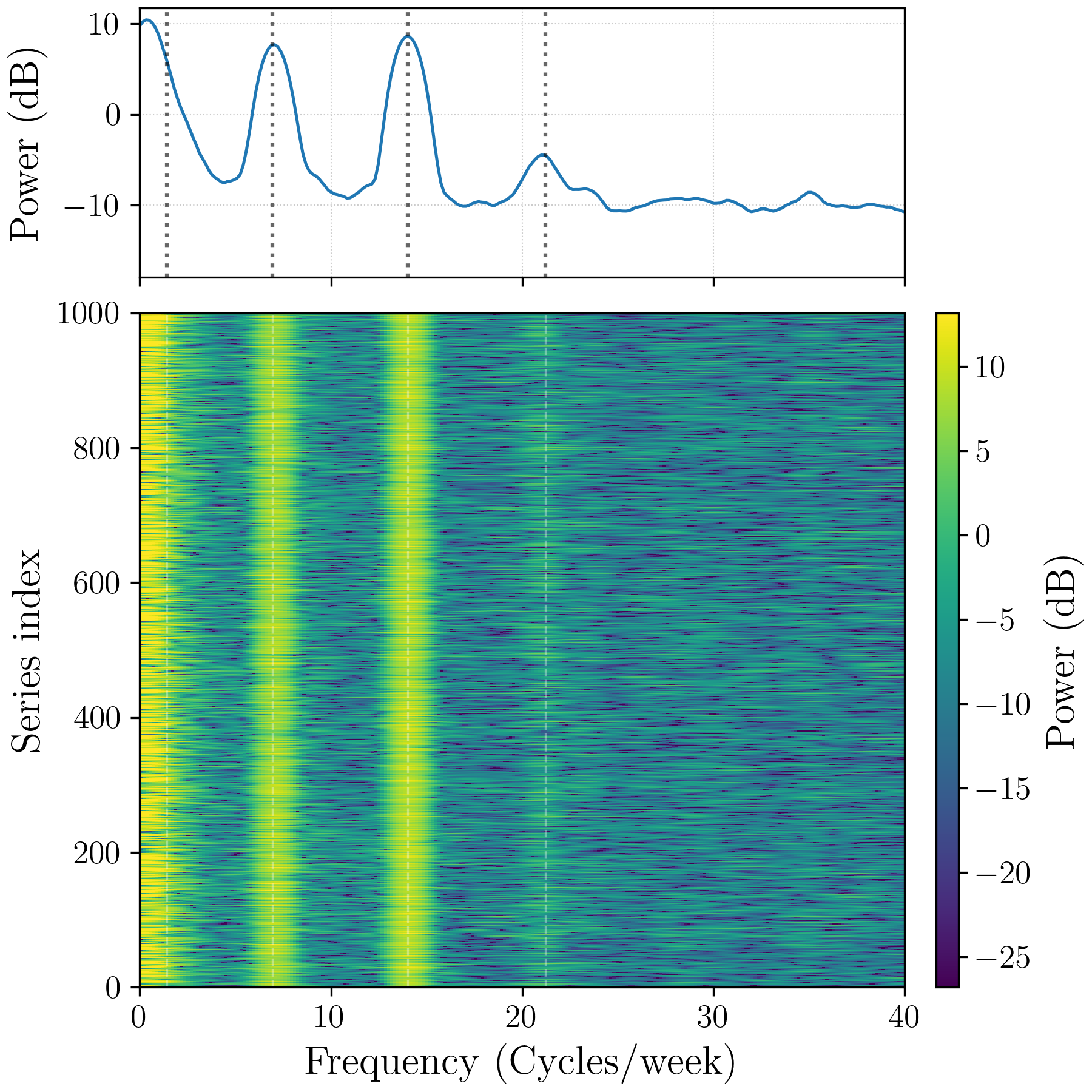}
    \caption{Spectral analysis of the Thermal Power Plant dataset used for prior extraction. The average periodogram identifies distributed dominant modes at $1.42$, $6.95$, $14.00$, and $21.20$ cycles/week, complemented by the spectral heatmap across all 1000 samples. The cumulative energy ratio reaches $\rho_{\text{spec}} = 0.991$ by combining the four identified modes.}
    \label{fig:FFT_TPP}
\end{figure}

Trend priors are estimated from the deseasonalized residuals for both datasets.
As shown in \autoref{fig:RealWorld_Trends}, the smooth residual structure of PJM leads to a compact configuration with $n_{opt}=3$, whereas the higher volatility of the TPP residuals requires $n_{opt}=13$ to satisfy the same stability tolerance of $\delta=0.1$ ($10\%$).
Despite these differences, both cases achieve substantial reductions in trend input dimensionality ($>90\%$) while preserving estimation reliability.

\begin{figure}[htbp]
     \centering
     \begin{subfigure}[b]{0.48\linewidth}
         \centering
         \includegraphics[width=\linewidth]{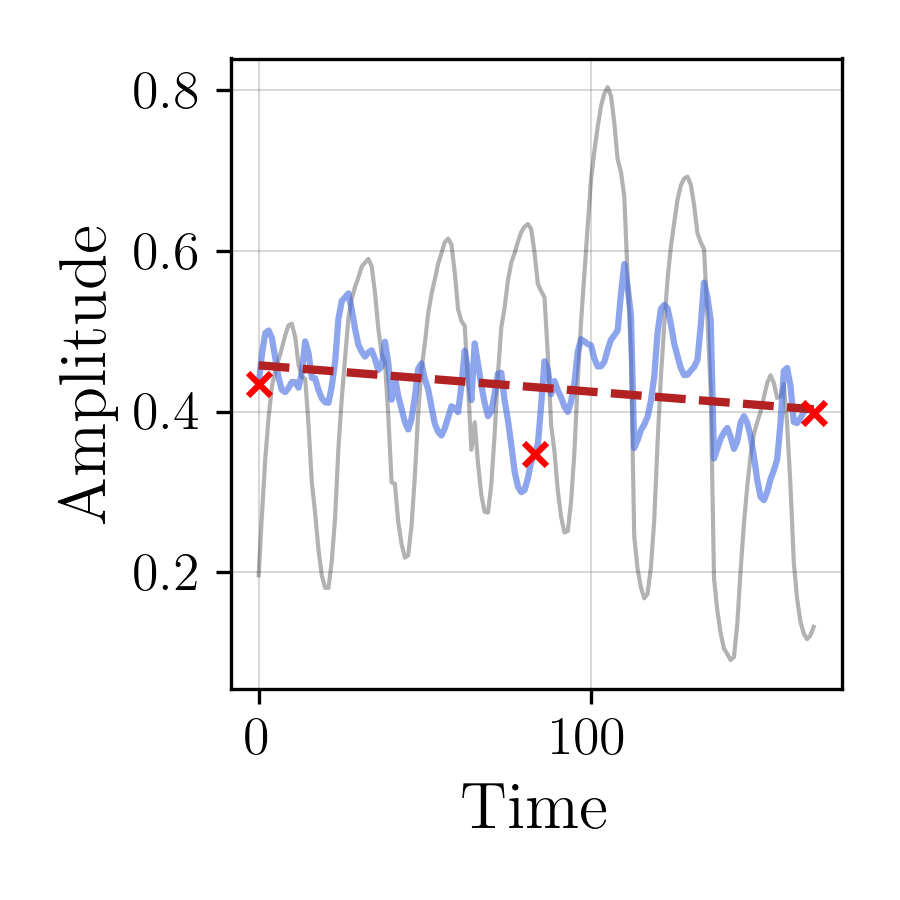} 
         \caption{PJM Hourly}
         \label{fig:trend_pjm}
     \end{subfigure}
     \hfill
     \begin{subfigure}[b]{0.48\linewidth}
         \centering
         \includegraphics[width=\linewidth]{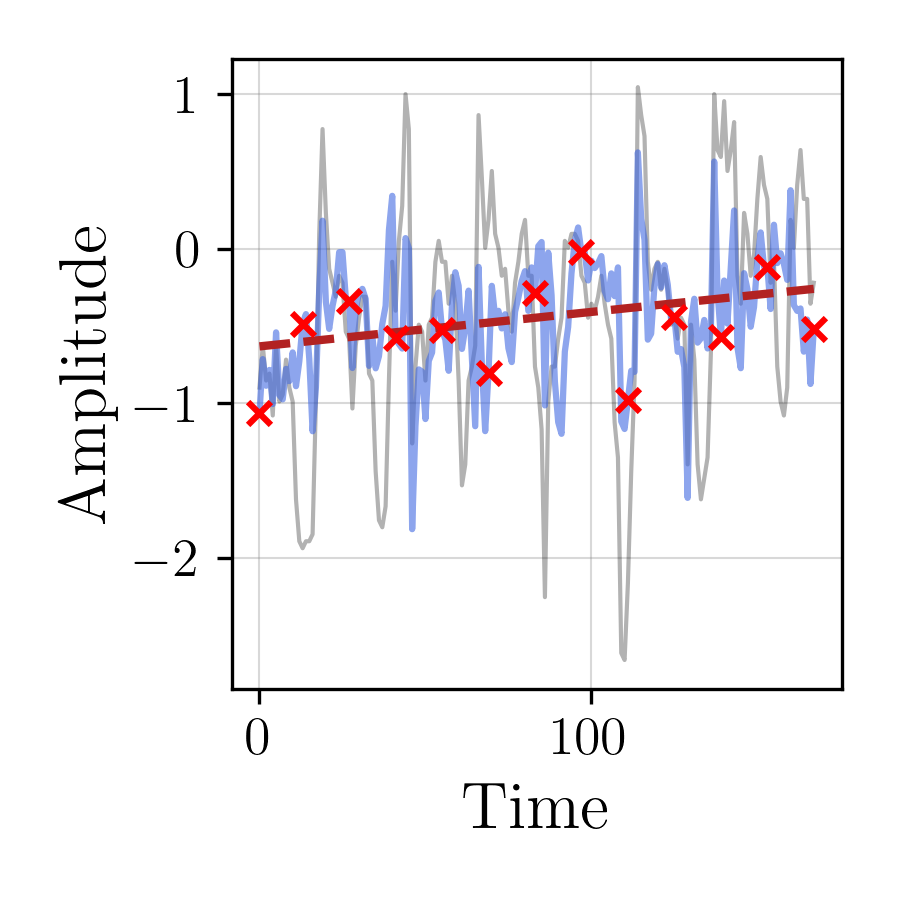} 
         \caption{Thermal Power Plant}
         \label{fig:trend_thermal}
     \end{subfigure}
     \caption{Trend input configuration based on deseasonalized residuals. The blue lines represent the signal proxy obtained by removing the detected harmonics $\mathcal{K}$. The red markers indicate the $n_{opt}$ equidistant observations selected to maximize stability, dimensioned according to the estimated residual volatility.}
     \label{fig:RealWorld_Trends}
\end{figure}

\autoref{tab:architecture_priors} summarizes the resulting data-driven configurations.
These parameters specify the BoF architecture before training, determining its depth, frequency initialization, and trend encoder input size.
The extracted priors serve as informed starting points rather than fixed constraints, allowing the network to refine these estimates during optimization and capture non-linear dynamics beyond the resolution of classical spectral and linear models.

\begin{table*}[t!]
\centering
\scriptsize
\setlength{\tabcolsep}{4pt}
\renewcommand{\arraystretch}{1.25}

\caption{Data-Driven Configuration and Initialization Priors. 
Consolidated architectural parameters derived from the two-stage diagnostic process. 
The table segregates the hyperparameters obtained from spectral decomposition (Seasonal Priors) and residual analysis (Trend Priors).}
\label{tab:architecture_priors}

\begin{tabular}{l c | c c P{4.5cm} | c c}
\toprule
\multirow{2}{*}{\textbf{Dataset}} & \multirow{2}{*}{\textbf{Win.} ($W$)} & \multicolumn{3}{c|}{\textbf{Frequency Analysis (Seasonal Priors)}} & \multicolumn{2}{c}{\textbf{Trend Analysis (Trend Priors)}} \\
\cmidrule(lr){3-5} \cmidrule(lr){6-7}
 & & \textbf{Ratio} ($\rho$) & \textbf{Depth} & \textbf{Mode Initialization} ($\mathcal{K}$) & \textbf{Noise} ($\hat{\sigma}$) & \textbf{Input} ($n_{opt}$) \\
\midrule
\textbf{Synthetic} & 100 & 0.873 & 3 & $[3.48, 7.56, 12.29]$ Hz & 0.850 & 11 \\
\textbf{PJM Hourly} & 168 & 0.960 & 2 & $[6.97, 13.99]$ cycles/win & 0.048 & 3 \\
\textbf{Thermal Plant} & 168 & 0.991 & 4 & $[14.00, 6.95, 1.42, 21.20]$ cycles/win & 0.486 & 13 \\
\bottomrule
\end{tabular}
\end{table*}

With the architectural topology and initialization states explicitly defined, we proceed to the empirical validation. The following section quantifies the practical impact of these data-driven priors on training dynamics, comparing the convergence speed and final accuracy of the proposed strategy against standard paradigms.

\subsection{Results}

This section reports the empirical evaluation of the proposed framework. We first analyze optimization behavior on a synthetic dataset, where ground-truth structure is known. We then assess generalization performance on the two real-world datasets, followed by an evaluation of computational efficiency.

\subsubsection{Validation on Synthetic Data}

The synthetic dataset enables a controlled analysis of how structural priors affect optimization stability and parameter convergence. \autoref{fig:boxplots_syn} summarizes the final training and testing MSE distributions over 10 independent runs per architecture.

\begin{figure}[htb]
    \centering
    \includegraphics[width=1\linewidth]{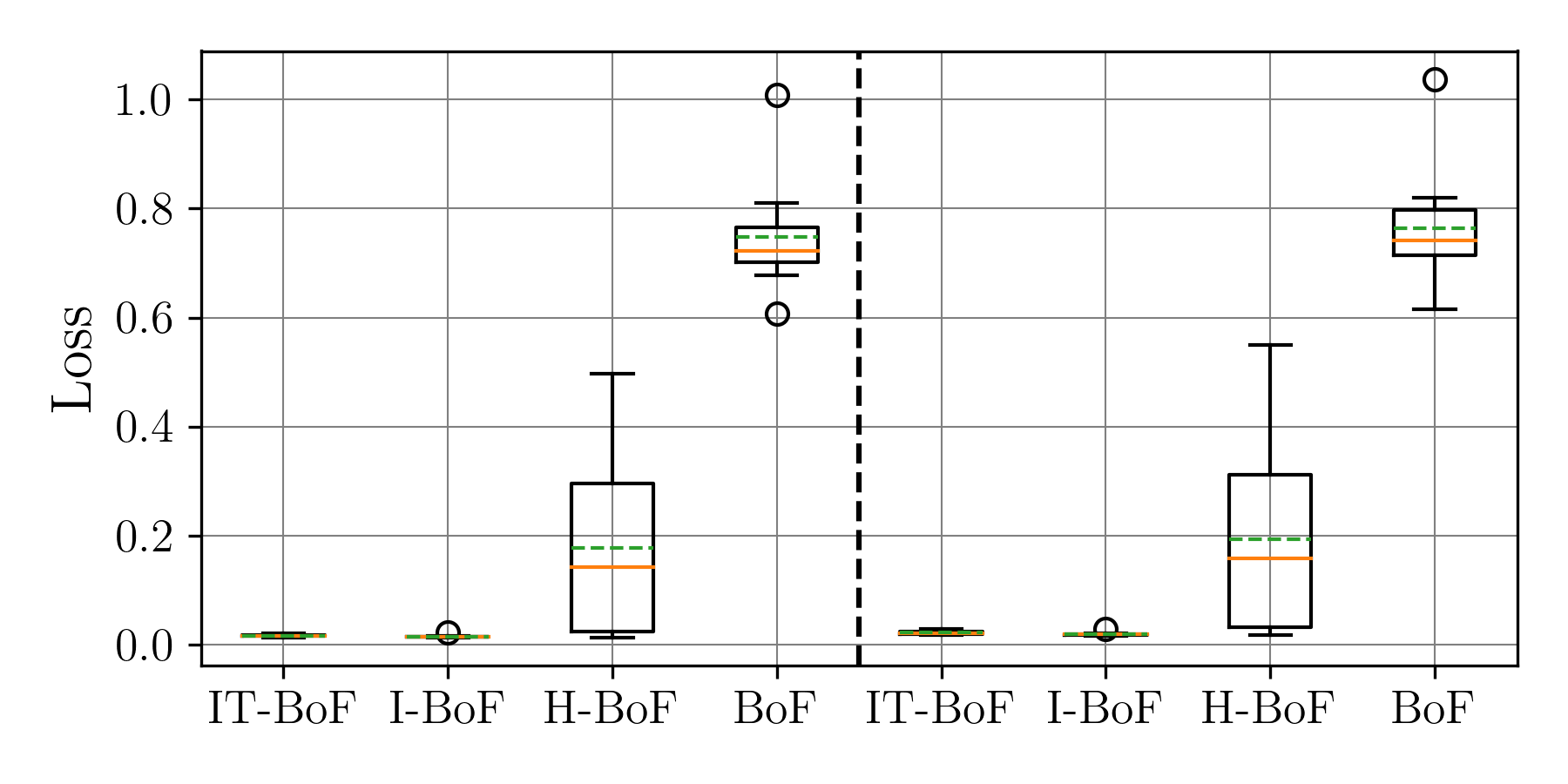}
    \caption{Distribution of the final MSE loss on the training (left) and testing (right) sets. The results summarize 10 independent trials for each of the four evaluated architectures. Reported values represent the average loss at the converged state. Within each box, the solid orange and dashed green lines denote the median and the mean, respectively.}
    \label{fig:boxplots_syn}
\end{figure}

The informed architectures (I-BoF and IT-BoF) achieve the lowest reconstruction error with minimal variance. IT-BoF attains a test MSE of $0.0220 \pm 0.0033$, statistically comparable to I-BoF ($0.0198 \pm 0.0034$), despite its reduced trend dimensionality. In contrast, the baseline BoF exhibits significantly higher error ($0.7633 \pm 0.1091$). Heuristic initialization (H-BoF) improves performance ($0.1937$ mean MSE) but remains less stable. Overall, informed initialization reduces error by approximately 97\% relative to the baseline, confirming that structural priors effectively stabilize the optimization.

To examine convergence behavior, \autoref{fig:convergence_syn} shows the loss trajectories of the best-performing trial per architecture.

\begin{figure}[htb]
    \centering
    \includegraphics[width=1\linewidth]{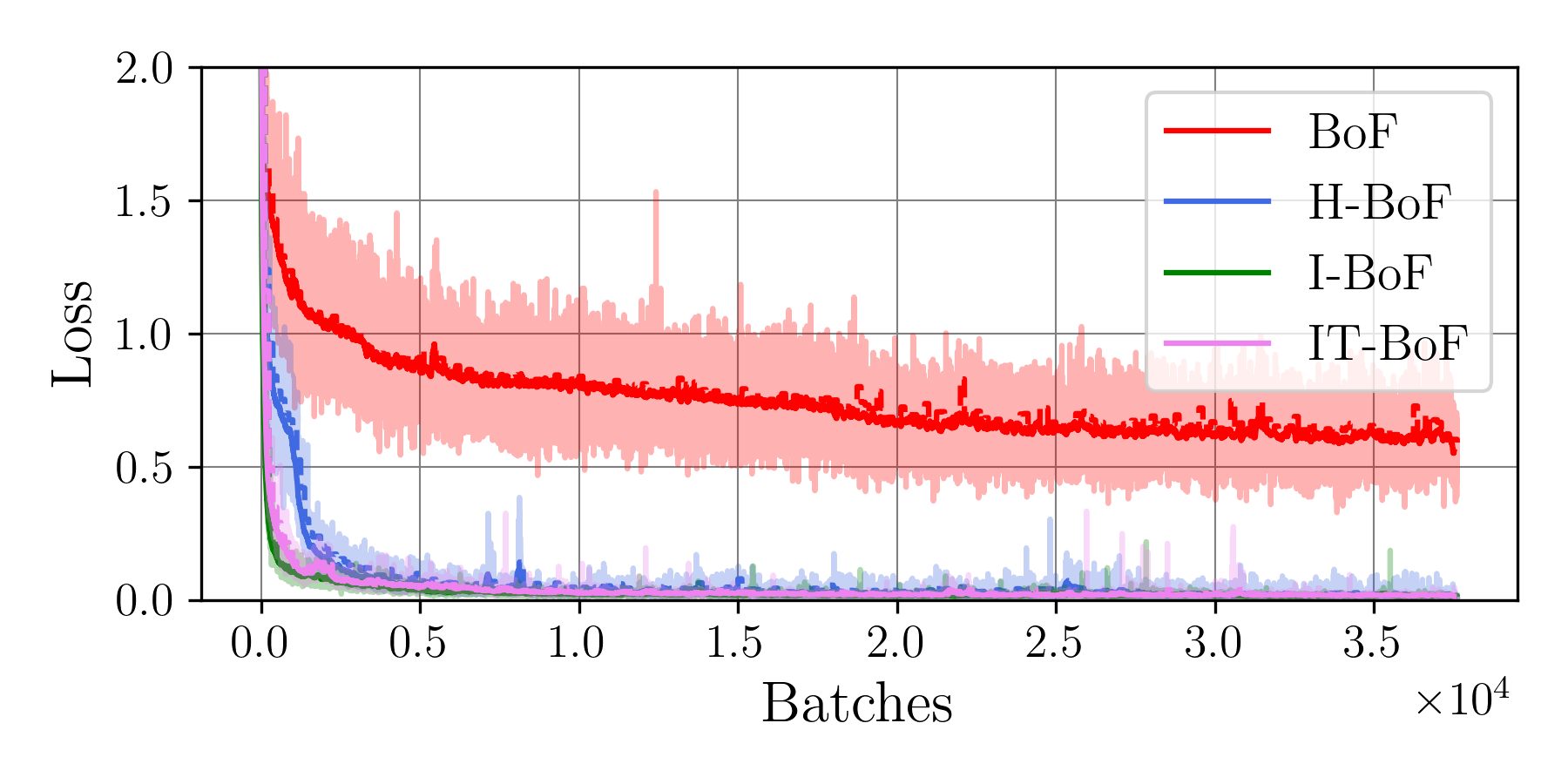}
    \caption{Loss trajectories of the best-performing trial for each architecture. The solid lines represent the moving average, while the lighter shaded regions indicate batch volatility.}
    \label{fig:convergence_syn}
\end{figure}

Although all models exhibit relatively high initial errors, the informed variants start with a substantially lower loss. While the baseline BoF converges slowly and H-BoF shows moderate improvement, both I-BoF and IT-BoF rapidly reach low-error regimes within early iterations. This confirms that data-driven priors substantially reduce optimization effort, a benefit that IT-BoF preserves despite its compact trend representation.

\autoref{fig:parameters_synthetic} reports the cumulative parameter displacement during training.

\begin{figure}[htb]
    \centering
    \includegraphics[width=1\linewidth]{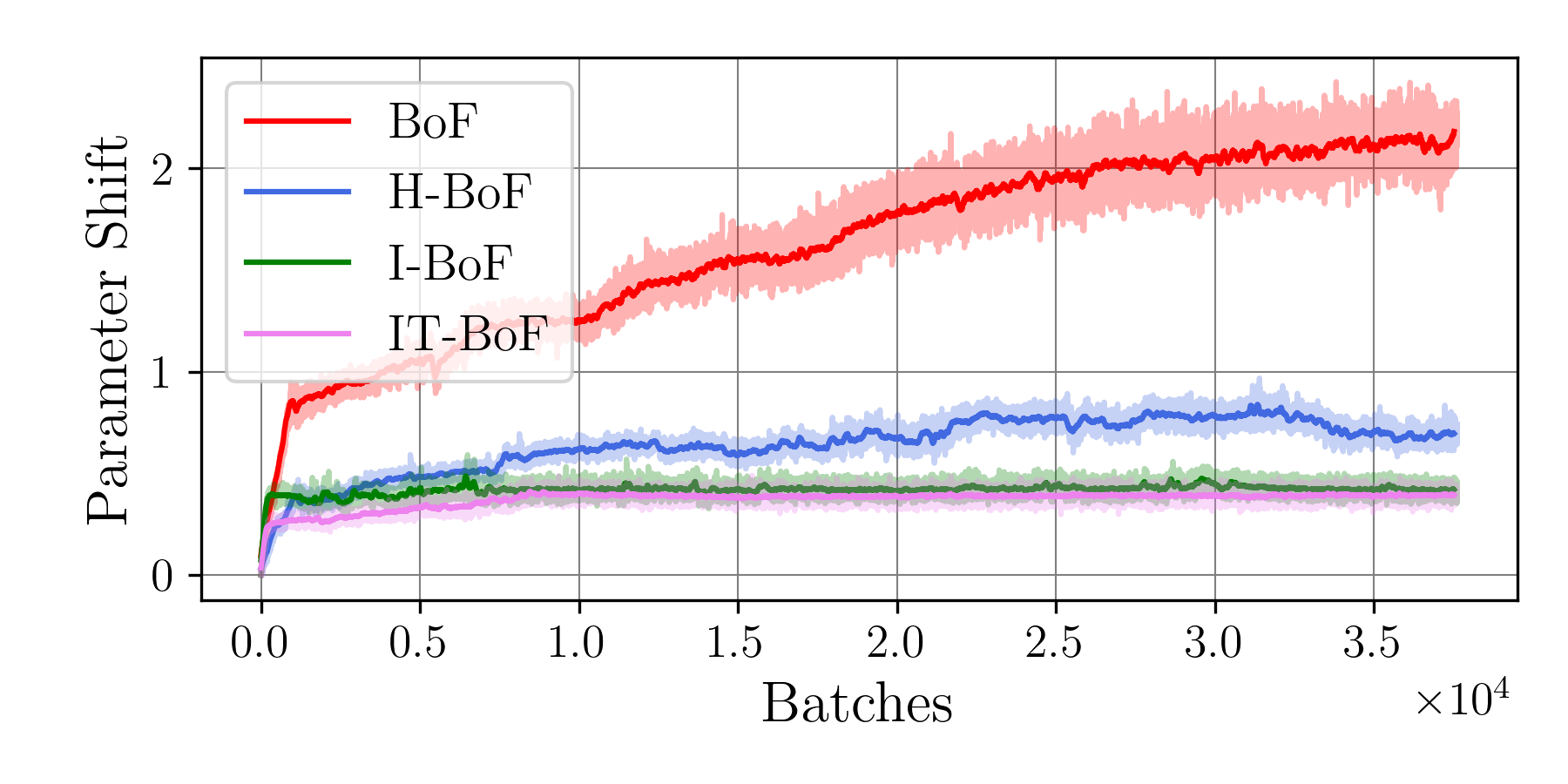}
    \caption{Evolution of the parameter displacement during training for the best-performing trial of each architecture. Solid lines represent moving averages, while shaded regions indicate batch-wise variability.}
    \label{fig:parameters_synthetic}
\end{figure}

The baseline BoF undergoes large and sustained parameter shifts, whereas informed architectures stabilize quickly after a brief adjustment phase. IT-BoF exhibits the smallest overall displacement, indicating that informed dimensionality reduction not only preserves accuracy but further constrains parameter drift, leading to more stable learning dynamics.

\subsubsection{Generalization to Real-World Scenarios}

We next evaluate generalization on two real-world datasets: PJM electricity demand and the Thermal Power Plant (TPP) dataset. \autoref{tab:results_real_world} reports reconstruction performance averaged over 10 independent trials.

\begin{table}[htbp]
\centering
\small
\caption{Reconstruction performance on real-world benchmarks. The table reports the Mean Squared Error for the training and testing partitions across the PJM and Thermal Power Plant (TPP) datasets. Results are averaged over ten independent trials per architecture, and the best performance per dataset is highlighted in bold.}
\label{tab:results_real_world}
\resizebox{\columnwidth}{!}{%
\begin{tabular}{l c c c c}
\toprule
\multicolumn{5}{c}{\textbf{Training MSE}} \\
\midrule
\textbf{Dataset} & \textbf{BoF} & \textbf{H-BoF} & \textbf{I-BoF} & \textbf{IT-BoF} \\
\midrule
PJM & $0.0116 \scriptstyle{\pm 0.0005}$ & $0.0066 \scriptstyle{\pm 0.0026}$ & $0.0060 \scriptstyle{\pm 0.0007}$ & $\mathbf{0.0046 \scriptstyle{\pm 0.0008}}$ \\
TPP & $0.4423 \scriptstyle{\pm 0.0178}$ & $0.1957 \scriptstyle{\pm 0.0431}$ & $\mathbf{0.1715 \scriptstyle{\pm 0.0025}}$ & $0.1755 \scriptstyle{\pm 0.0043}$\\
\midrule
\multicolumn{5}{c}{\textbf{Testing MSE}} \\
\midrule
PJM & $0.0155 \scriptstyle{\pm 0.0008}$ & $0.0130 \scriptstyle{\pm 0.0131}$ & $0.0099 \scriptstyle{\pm 0.0024}$ & $\mathbf{0.0074 \scriptstyle{\pm 0.0011}}$ \\
TPP & $0.4621 \scriptstyle{\pm 0.0192}$ & $0.2309 \scriptstyle{\pm 0.0481}$ & $\mathbf{0.1958 \scriptstyle{\pm 0.0027}}$ & $0.2035 \scriptstyle{\pm 0.0077}$ \\
\bottomrule
\end{tabular}%
}
\end{table}

Across both datasets, informed architectures consistently outperform the baseline and heuristic variants. On PJM, characterized by smooth dynamics, IT-BoF achieves the best training and testing performance with a small generalization gap, reducing the testing MSE from $0.0155$ (BoF) to $0.0074$, which corresponds to a relative improvement of approximately $52\%$. On the more complex TPP dataset, I-BoF attains the lowest testing error, lowering the MSE from $0.4621$ to $0.1958$ (about $58\%$ reduction), while IT-BoF remains competitive despite its reduced dimensionality, achieving a comparable $56\%$ improvement over the baseline. These results confirm that the benefits of data-driven priors transfer robustly beyond controlled synthetic settings. 
\autoref{fig:RealWorld_Parameters} illustrates the evolution of parameter displacement during training.

\begin{figure}[htbp]
     \centering
     \begin{subfigure}[b]{0.48\linewidth}
         \centering
         \includegraphics[width=\linewidth]{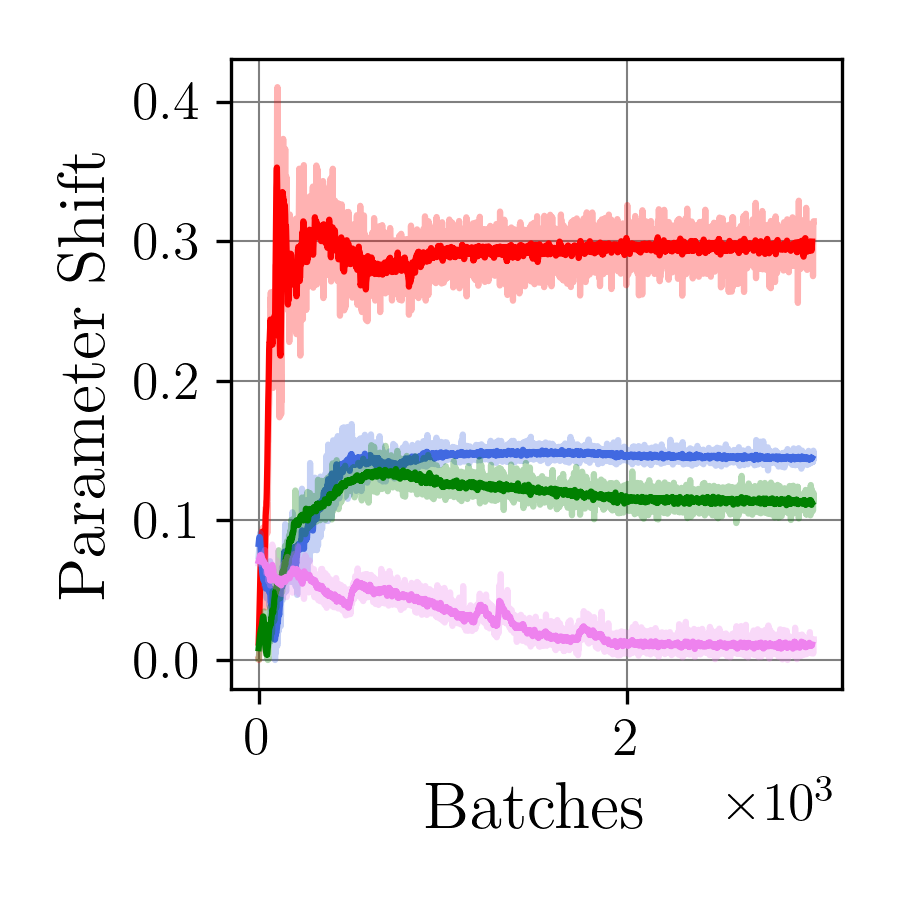} 
         \caption{PJM Hourly}
         \label{fig:parameters_pjm}
     \end{subfigure}
     \hfill
     \begin{subfigure}[b]{0.48\linewidth}
         \centering
         \includegraphics[width=\linewidth]{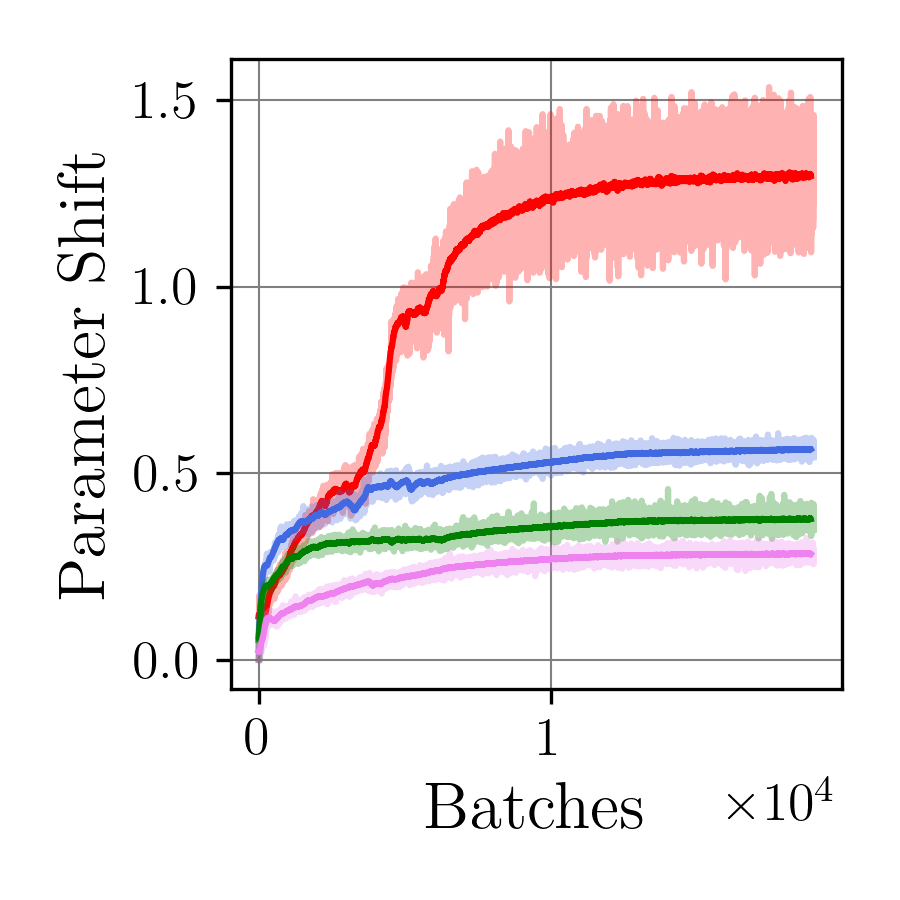} 
         \caption{Thermal Power Plant}
         \label{fig:parameters_thermal}
     \end{subfigure}
     \caption{Evolution of parameter displacement during training on real-world datasets for the best-performing trial of each architecture. Results are shown for the PJM (left) and Thermal Power Plant (right) datasets. Parameter displacement is measured relative to the initialization and averaged across network parameters. Solid lines denote moving averages, while shaded regions indicate batch-wise variability. Colors correspond to BoF (red), H-BoF (blue), I-BoF (green), and IT-BoF (pink).}
     \label{fig:RealWorld_Parameters}
\end{figure}

For both datasets, informed architectures exhibit substantially lower and more stable parameter drift than BoF, indicating that improved reconstruction accuracy is accompanied by constrained optimization trajectories. IT-BoF consistently shows the smallest long-term displacement, reflecting the stabilizing effect of informed initialization under realistic operating conditions.

While the previous analysis highlights the benefits of structural priors in terms of reconstruction accuracy and optimization stability, practical deployment in real-world systems additionally requires favorable computational characteristics. We therefore complement the performance analysis with an evaluation of the computational efficiency of the proposed architectures.

\subsubsection{Computational Efficiency}

Beyond reconstruction accuracy and optimization behavior, the practical applicability of the proposed framework depends critically on its computational efficiency. \autoref{tab:efficiency_metrics} summarizes model size, computational complexity, and inference performance across datasets. 

\begin{table}[ht]
\centering
\caption{Computational efficiency comparison (Parameters, FLOPs, Latency, and Throughput) for Synthetic and PJM datasets.}
\label{tab:efficiency_metrics}
\resizebox{\columnwidth}{!}{%
\begin{tabular}{l l c c c c}
\toprule
\textbf{Dataset} & \textbf{Model} & \textbf{Params ($\times 10^3$)} & \textbf{FLOPs ($\times 10^3$)} & \textbf{Latency (ms)} & \textbf{Throughput (samples/s)} \\
\midrule
\multirow{4}{*}{\textbf{Synthetic}} 
 & BoF    & \multirow{3}{*}{62.25} & \multirow{3}{*}{61.54} & 4.484 & 3895.1 \\
 & H-BoF  &                        &                        & 4.417 & 3879.2 \\
 & I-BoF  &                        &                        & 4.401 & 3388.5 \\
 & IT-BoF & 47.38                  & 46.81                  & 4.329 & 4012.0 \\
\midrule
\multirow{4}{*}{\textbf{PJM}}       
 & BoF    & \multirow{3}{*}{63.23} & \multirow{3}{*}{62.75} & 2.993 & 5749.1 \\
 & H-BoF  &                        &                        & 2.993 & 5886.0 \\
 & I-BoF  &                        &                        & 2.747 & 6144.8 \\
 & IT-BoF & 44.89                  & 44.53                  & 2.662 & 6223.9 \\
\midrule
\multirow{4}{*}{\textbf{TPP}}       
 & BoF    & \multirow{3}{*}{126.55} & \multirow{3}{*}{125.60} & 5.838 & 2980.7 \\
 & H-BoF  &                        &                        & 5.603 & 3160.8 \\
 & I-BoF  &                        &                        & 5.663 & 3131.2 \\
 & IT-BoF & 97.69                  & 96.90                 & 5.405 &  3198.5\\
\bottomrule
\end{tabular}%
}
\end{table}

While BoF, H-BoF, and I-BoF share similar architectural complexity, IT-BoF achieves a consistent 20–30\% reduction in parameters and FLOPs due to informed trend dimensionality reduction. These reductions translate into improved throughput (up to 8\%) and comparable or lower latency across datasets, without altering the training procedure or model expressiveness.

Overall, the proposed framework improves reconstruction accuracy, optimization stability, and computational efficiency simultaneously, supporting its suitability for scalable and resource-constrained time-series applications.

\subsection{Comparative Benchmarking against State-of-the-Art}

Having established the impact of informed initialization on training dynamics and convergence behavior, we now assess whether these advantages translate into measurable gains under standardized benchmarking protocols. To this end, we evaluate the proposed initialization strategy within a widely adopted comparative setting for time-series generative models, enabling a direct performance-based comparison against representative GAN- and VAE-based baselines.

Rather than introducing a new architectural variant, this benchmark is designed to isolate the effect of initialization by applying the informed prior exclusively at the optimization level. All competing models are evaluated under identical preprocessing, training, and evaluation conditions, ensuring that observed differences can be attributed to the proposed strategy rather than to architectural or procedural discrepancies.

We report results on four datasets comprising Sine, Stocks, Air Quality, and PU, covering both synthetic and real-world domains with varying temporal structure and noise characteristics. Performance is assessed using discriminative and predictive metrics, which jointly capture distributional fidelity and downstream forecasting utility. \autoref{tab:empirical_results} summarizes the comparative results.

\begin{table*}[ht]
\centering
\scriptsize
\setlength{\tabcolsep}{2pt}
\caption{Empirical Test Data Results. The discriminative and predictive tasks are performed across four different time series datasets and ten synthetic data generators. The predictive scores are compared to the original train and test data domain.}
\label{tab:empirical_results}
\begin{tabular}{c|c|P{0.9cm}P{0.9cm}P{0.9cm}P{0.9cm}P{0.9cm}P{0.9cm}P{0.9cm}P{0.9cm}P{0.9cm}P{0.9cm}P{1.1cm}P{1.1cm}}
\toprule
\textbf{Dataset} & \textbf{Metric} & \textbf{Original} & \textbf{R-GAN} \cite{esteban2017real} & \textbf{LSTM-GAN} \cite{leznik2021multivariate} & \textbf{ITF-GAN} \cite{klopries2023extracting} & \textbf{CR-VAE} \cite{li2023causal} & \textbf{Wave-GAN} \cite{donahueadversarial} & \textbf{CRNN-GAN}\cite{mogren2016c} & \textbf{Time-VAE} \cite{desai2021timevae} & \textbf{ITF-VAE} \cite{klopries2025itf} & \textbf{ITF-VAE-I} \cite{klopries2025itf} & \textbf{ITF-VAE-Init}\cite{torres2025toward}& \textbf{ITF-VAE-Informed Init}\\
\midrule
\multirow{2}{*}{Sine} 
& Discriminative & - & 0.1275 & 0.3255 & 0.2274 & 0.1756 & 0.4970 & 0.2580 & 0.2705 & 0.0740 & 0.1910 & 0.0660 & \textbf{0.0550}\\
& Predictive     & 0.0320 & 0.1000 & 0.1162 & 0.1061 & 0.1008 & 0.2566 & 0.0988 & 0.1616 & 0.0612 & 0.0707 & 0.0103 & \textbf{0.0087}\\
\midrule
\multirow{2}{*}{Stocks} 
& Discriminative & - & 0.2143 & 0.4922 & 0.2325 & 0.4627 & 0.4582 & 0.1254 & 0.3632 & 0.1002 & 0.2709 & 0.0099 & \textbf{0.0050}\\
& Predictive     & 0.0730 & 0.1020 & 0.1145 & 0.0967 & 0.1857 & 0.1180 & 0.1042 & 0.1200 & 0.0881 & 0.2395 & 0.0180 & \textbf{0.0170}\\
\midrule
\multirow{2}{*}{Air} 
& Discriminative & - & 0.3809 & 0.4935 & 0.4521 & 0.4556 & 0.4878 & 0.3387 & \textbf{0.3015} & 0.3955 & 0.4376 & 0.3995 & 0.3565\\
& Predictive     & 0.2766 & \textbf{0.3806} & 0.5954 & 0.4787 & 0.5073 & 0.6428 & 0.4137 & 0.3850 & 0.5112 & 0.6136 & 0.4246 & 0.4145\\
\midrule
\multirow{2}{*}{PU} 
& Discriminative & - & 0.4880 & 0.2530 & 0.2683 & 0.3885 & 0.3100 & 0.2930 & 0.2480 & 0.2085 & 0.3335 & 0.0785 & \textbf{0.0685} \\
& Predictive     & 0.1163 & 0.1774 & 0.1536 & 0.1219 & 0.1611 & 0.1513 & 0.1340 & 0.1458 & 0.1336 & 0.1629 & 0.0480 & \textbf{0.0461}\\
\bottomrule
\end{tabular}
\end{table*}

The results in \autoref{tab:empirical_results} indicate that informed initialization yields consistent and substantial improvements across multiple datasets and evaluation criteria. In particular, the ITF-VAE-Informed Init achieves the lowest discriminative and predictive scores on the Sine, Stocks, and PU datasets, outperforming both adversarial and variational baselines that rely on data-driven initialization schemes.

These findings suggest that embedding physically motivated priors at initialization time improves not only optimization efficiency but also the quality of the learned generative distribution. Importantly, these gains are achieved without modifying the underlying model architecture, highlighting initialization as an effective and generally applicable lever for enhancing time-series generative models.

\section{Conclusion}
\label{sec:Conclusion}

This work examined the impact of domain-informed initialization on the optimization behavior, generalization performance, and efficiency of neural function-parameterizing architectures. By embedding spectral and trend priors directly into the initialization process, we demonstrated that meaningful structural information can be exploited without modifying the underlying model architecture or training procedure.

Experiments on synthetic and real-world datasets showed that informed initialization consistently stabilizes optimization, accelerates convergence, and reduces reconstruction error relative to standard and heuristic baselines. These gains are accompanied by more constrained parameter evolution during training, indicating improved optimization trajectories and robust behavior across independent trials.

Beyond accuracy, the proposed framework delivers tangible computational benefits. By leveraging trend-informed dimensionality reduction, the IT-BoF architecture achieves substantial reductions in model size and computational cost, resulting in higher throughput and comparable or lower inference latency. These results highlight the practical value of principled initialization strategies for scalable and resource-constrained time-series applications.

Benchmarking against state-of-the-art generative models further confirms that informed initialization improves discriminative and predictive performance across diverse datasets, despite operating exclusively at the optimization level. This underscores initialization as a powerful and broadly applicable mechanism for enhancing time-series models.

Future work will focus on exploiting data-driven prior extraction to further automate the design of compact, adaptive architectures whose complexity is directly matched to the intrinsic structure of the data. In addition, coupling the proposed framework with adaptive or online learning settings represents a promising direction, enabling architectures to evolve in response to changing signal characteristics.

\bibliographystyle{IEEEtran}
\bibliography{references}

\end{document}